\newtheorem{lemma}{Lemma}[section]
\newtheorem{theorem}{Theorem}[section]
\definecolor{darkblue}{rgb}{0.8,0,0.0}
\newtheorem{myDef}{Definition}
\newtheorem{myAum}{Assumption}
\newtheorem{myPro}{Proposition}
\begin{document} 
  \title{\LARGE \bf
 A Flexible and Resilient Formation Approach based on Hierarchical Reorganization}
   \author{~Yuzhu~Li, ~Wei~Dong$^*$
\thanks{
The authors are with the State Key Laboratory of Mechanical System
 	and Vibration, School of Mechanical Engineering, Shanghai Jiao Tong University, Shanghai 200240, China (e-mail: \{yuzhu\_0222, dr.dongwei\}@sjtu.edu.cn).}.
  }
 \markboth{Journal of \LaTeX\ Class Files,~Vol.~X, No.~X, X~XXXX}{Shell \MakeLowercase{\textit{et al.}}: Bare Demo of IEEEtran.cls  for IEEE Journals}

\maketitle 
 \begin{abstract}  
   Conventional formation methods typically rely on fixed hierarchical structures, such as predetermined leaders or predefined formation shapes. These rigid hierarchies can render formations cumbersome and inflexible in complex environments, leading to potential failure if any leader loses connectivity. To address these limitations, this paper introduces a reconfigurable affine formation that enhances both flexibility and resilience through hierarchical reorganization. The paper first elucidates the critical role of hierarchical reorganization, conceptualizing this process as involving role reallocation and dynamic changes in topological structures. To further investigate the conditions necessary for hierarchical reorganization, a reconfigurable hierarchical formation is developed based on graph theory, with its feasibility rigorously demonstrated. In conjunction with role transitions, a power-centric topology switching mechanism grounded in formation consensus convergence is proposed, ensuring coordinated resilience within the formation. Finally, simulations and experiments validate the performance of the proposed method. The aerial formations successfully performed multiple hierarchical reorganizations in both three-dimensional and two-dimensional spaces. Even in the event of a single leader's failure, the formation maintained stable flight through hierarchical reorganization. This rapid adaptability enables the robotic formations to execute complex tasks, including sharp turns and navigating through forests at speeds up to 1.9 m/s.
 \end{abstract}
 
 \begin{IEEEkeywords}
 Hierarchical Reorganization, Reconfigurable hierarchical formation, Leader Selection, Swarm 
 \end{IEEEkeywords}
 
 \section{Introduction}
   Aerial robots have garnered growing scholarly interest in recent years due to their vast potential for application in collaborative tasks, including exploration\cite{Hu2021, Zhou2022}, inspection\cite{Abdelkader2021}, fire rescue\cite{Merino2012}. Swarm formations, in particular, exhibit remarkable performance in self-organization, survivability, and collaborative task execution\cite{Dong2015}. In complex environments, adopting variable formations during flight becomes crucial for maintaining optimal mobility and flexibility\cite{Yu2022}.
   
   The architectural frameworks facilitating formation reorganization can principally be categorized into democratic and autocratic. In democratic formations, the entire group is generally regarded as a unified entity, each planning their actions based on collective information and influencing the overall formation. For instance, in \cite{Zhou2018}, the formation is conceptualized as a virtual rigid body, which allows for versatile transformations of various formation patterns by adjusting the relative orientations and positions of individual aerial robots with a virtual center. This approach facilitates the execution of complex tasks, such as navigating through cluttered buildings, dense forests, or disaster-stricken sites. However, this method relies on a virtual structure to govern the formation's pattern, demanding real-time computation of transfer vectors for each individual within the formation. As the scale of aerial robots increases, the complexity of democratic reconfiguration correspondingly escalates, thereby diminishing the formation's flexibility and robustness.
   
   Conversely, autocratic formation reorganization relies on a subset of agents within the formation. By hierarchically distributing the power among different individuals, certain members exert influence over the overall formation pattern, facilitating formation reorganization. For example, Ref. \cite{Zhao2018} introduces an affine formation method within a leader-follower framework. This method achieves comprehensive formation changes through the strategic positioning of the leader, enabling seamless continuous transitions, including translation, rotation, contraction, and deformation. However, the autocratic formations' leaders remain unchanged, forming a unidirectional affine transformation. This configuration may pose challenges when the swarm needs to execute sharp turns or large-angle maneuvers, potentially limiting its flexibility. Notably, the entire formation with fixed hierarchy is at risk of collapse if any single leader becomes inoperative.
   
   Given the abovementioned challenges, our approach draws inspiration from the swerving and hierarchical formation movements observed in starling clusters, as explored in \cite{Mateo2017, Attanasi2014}. Starling flocks typically feature one or multiple leaders, with the formation's internal framework exhibiting a structure between democratic and autocratic paradigms. Building on this, the cluster dynamically adjusts its movement patterns, leading to hierarchical reorganization within the formation. In light of this natural phenomenon, we introduce an reconfigurable hierarchical formation(RHF), leveraging hierarchical reorganizations to enhance the flexibility and resilience of formation.
   
   First, we elucidate the definition of hierarchical reorganization for a formation. Building on this, the paper defines a reconfigurable hierarchical formation based on affine formation theory and analyzes the configuration requirements necessary for achieving hierarchical reorganization, providing rigorous proof. Subsequently, to enhance the internal synchronization of dynamic formations, a power-centric topology switching mechanism is proposed based on formation consensus convergence. This mechanism adapts to the dynamic reorganization of intra-formation connections, ensuring coordinated resilience of the formation.
   
   The main contributions of the paper are as follows:
   \begin{enumerate}
    \item A novel RHF strategy is proposed to efficiently improve the formation's flexibility and resilience.  
    \item The necessary conditions and theory analysis for hierarchical reorganization are proposed, with their feasibility and necessity rigorously demonstrated.
    \item A power-centric topology switching algorithm is put forward, aimed at mitigating disturbances caused by followers to further enhance the formation's internal synchrony.
    \item Simulation and Experiments are carried out to verify the performance of the RHF strategy.

   \end{enumerate} 
   
   The remainder of this paper is organized as follows. Related works are introduced in Section \MakeUppercase{\romannumeral2}. Preliminaries and problem formulation are given in Section \MakeUppercase{\romannumeral3}. The specific methodology is described in Section \MakeUppercase{\romannumeral4}, which contains specific analysis of RHF and a power-centric topology switch strategy. Verification is carried out in Section \MakeUppercase{\romannumeral5}, while Section \MakeUppercase{\romannumeral6} concludes the paper.
   
   \section{Related Works}
   The related works are segmented into two sections. The first section explores hierarchical affine formation, including the development of affine formations and the current state of dynamic formation development. The second section critically evaluates the role of topological switching, emphasizing its critical importance in hierarchical formation.
   
   \subsection{Hierarchical Affine Formation}
   Hierarchical formation control, with its advantage of addressing capability disparities among individuals, has become one of the primary methods in formation control\cite{Chen2015}. Numerous researchers have investigated formation control problems using leader-follower strategies \cite{Cao2013, Oh2015}. They have identified various formation structures, such as single-leader formation\cite{Kang2014, Babazadeh2018, Dai2022}, multi-leader formation\cite{Fang2022, Nomura2021}, and virtual leader formation\cite{Porfiri2007, Droge2015}. Multi-leader structures, which allow for affine formation through adjusting leaders' relative positions, have gained particular interest compared to single-leader and virtual-leader formation. 
   
    To achieve affine transformations, Ref.\cite{Lin2016} explores the sufficient and necessary conditions for realizing affine formations based on graph theory. Building on this, Ref.\cite{Zhao2018} proposes a control method for leader-follower formations based on affine transformation, demonstrating through simulations that the following error of followers converges exponentially. Furthering this approach, Ref.\cite{Li2021} introduces a hierarchical affine control algorithm to achieve formation control under conditions of non-global information, thereby enhancing the robustness of the control strategy. However, these control algorithms are all based on static and predetermined structures, which may be unresponsive and cumbersome, making it difficult for the formation to perform large maneuvers while maintaining formation in complex environments.

    Based on this, researchers have found that in practical situations, dynamic leader selection has been shown to improve the overall performance of a robot team compared to a static leader \cite{Franchi2011}. Current dynamic leader selection involves changes in leader roles during the task due to various factors. On the one hand, this includes human or human base station influence over the robotic system. For instance, Ref.\cite{Misra2021, AbdelMalek2022, Ganesan2020} improve communication quality with the base station by online leader selection in the swarm, and Ref.\cite{Saeidi2017} proposes using human-robot trust as a dynamic criterion for leader selection, reducing task completion time and formation errors compared to non-leader switching strategies. However, these methods are constrained by the quality of human-machine communication and the accuracy of human judgment on the swarm's current state, making them unsuitable for autonomous systems.

    Considering the autonomy of the swarm, another aspect of dynamic leader selection is the influence of the external environment on the robotic swarm. For example, Ref.\cite{Tavares2020} suggests a method for reselecting leaders in case of leader failure in environments with dense obstacles. To address the leader trapping issue caused by different probabilities of encountering obstacles for different roles within the formation, Ref.\cite{Li2017} introduces an emotion-based model for leader selection. This model allows the team to autonomously reselect a leader when trapped and continue moving towards the goal. However, while this method can increase the probability of individual escape, it does not guarantee the successful escape of the entire swarm. 
    
   \subsection{Topology Switching for Hierarchical Formation}
    Research has indicated that internal topology switching within a formation can enhance multiple capabilities of the formation. For example, Ref.\cite{Yu2022a} enhances the resilience of a swarm by designing internal topology transitions; Ref.\cite{Zhao2023} proposes an optimal rigidity graph-based topology optimization algorithm to reduce the communication complexity of formations, thereby extending the network's lifespan; Ref.\cite{Yu2020a} designs smooth transitions in communication to ensure security during topological changes. The introduction of dynamic topology switching into formation control has added new variables \cite{Mesbahi2010, OlfatiSaber2007}, making this area a subject of extensive research.

    Considering the control problems of formations under topology switching, Ref.\cite{Dong2017} proposes the necessary and sufficient conditions for achieving time-varying formation tracking under topology transitions, and the system's stability has been verified experimentally. Building on this, Ref.\cite{Zou2018} presents a distributed control algorithm using neighboring positions and velocities during the topology transition process when weak connections exist in the communication topology among individuals in the swarm. Considering different formation structures, the problem of hierarchical formation control with topology switching has been further explored.

    To investigate the impact of topology switching on the control of hierarchical formation structures, Ref.\cite{Liu2008} conducts a controllability analysis of hierarchical formations under topology switching. Based on this, Refs.\cite{Wang2012, Soni2021} study the control issues of leaders and followers during topology transitions. Furthermore, Refs.\cite{Lin2016, Zhao2018} explore the relationship between hierarchical control issues in affine formation and topological structures. Considering the introduction of hierarchical reorganization strategies in these problems, Ref.\cite{Xiao2022} optimizes the formation of topological structures to reduce communication costs while improving convergence speed. However, no method has yet explored the impact of topology switching on formation control during hierarchical reorganizations.
  
 \section{Preliminaries and Problem Statement} 
  \subsection{Basic Graph and Formation Theory}
   Consider a group of $n$ mobile agents in ${\mathbb{R}}^d$ where $d > 2$ and $n \geq d + 1$. Let $\boldsymbol{p}_i \in \mathbb{R}^d$ be the position of agent $i$ and $\boldsymbol{p} = [\boldsymbol{p}_1^T,...,\boldsymbol{p}_n^T]^T \in \mathbb{R}^{dn}$ be the configuration of all the agents. The interaction between the agents is modeled by a graph $\mathcal{G}(\mathcal{V}, \mathcal{E})$ where $\mathcal{V} = \{1,2,...,n\}$ is the node set and $\mathcal{E} \subseteq \mathcal{V} \times \mathcal{V}$ is the edge set. For a digraph, the edge set consists of directed edges $(j, i)$, where node $j$ is the tail node and node $i$ is the head node. Node $j$ is called the in-neighbor of node $i$, while node $i$ is called the out-neighbor of node $j$ and $\mathcal{N}_i:=\{j|(j, i)\in \mathcal{E}\}$ denotes the in-neighbor set for agent $i$. In a digraph, $(j, i) \in \mathcal{E} \neq (i, j)\in\mathcal{E}$. 
   
   A formation, denoted as $(\mathcal{G}, \boldsymbol{r})$, is the graph $\mathcal{G}$ with its vertex $i$ mapped to point $\boldsymbol{r}_i$, where $\boldsymbol{r}$ represents the configuration of $\mathcal{G}$. Without loss of generality, suppose the first $n_l$ agents are leaders and the rest $n_f = n - n_l$ agents are followers. Let $\mathcal{V}_l = {1,...,n_l}$ and $\mathcal{V}_f = \mathcal{V}\setminus \mathcal{V}_l$ be the sets of leaders and followers, respectively. The current positions of the leaders and followers are denoted as $\boldsymbol{p}_l = [\boldsymbol{p}_1^T,...,\boldsymbol{p}_{n_l}^T]^T$ and $\boldsymbol{p_f} = [\boldsymbol{p}_{n_l+1}^T,...,\boldsymbol{p}_n^T ]^T$, respectively. 
   
  \subsection{Affine Image and Affine Localizability}
   For configuration $\boldsymbol{r}$, if there are real matrixs $\bold{A}$ and $\bold{B}$ with appropriate dimensions, $\mathcal{A}(\boldsymbol{r}) = \{\boldsymbol p \in \mathbb{R}^{dn} :=(\boldsymbol{I}_n \otimes \bold{A})\boldsymbol{r} + \bold{1} \otimes \boldsymbol{B}\}$, $\mathcal{A}(\boldsymbol{r})$ is defined as an \textit{Affine Image} of $\boldsymbol{p}$. Given that the formation configuration $\boldsymbol{r}$, then at any time during the affine formation movement, the formation $\boldsymbol{p}$ must satisfy that $\boldsymbol{p}\in \mathcal{A}(\boldsymbol{r})$.
   
   In a hierarchical formation, it is necessary to infer the current position of the followers based on the position of the leader. According to \cite{Zhao2018}, \textit{affine localizability} can be defined as follows:
   For any $\boldsymbol{p}=[\boldsymbol{p}_l, \boldsymbol{p}_f]^T\in \mathcal{A}(r)$, if $\boldsymbol{p}_f$ can be uniquely determined by $\boldsymbol{p}_l$,  then the formation is affinely localizable.
   
   The conditions for \textit{affine localizability} can be divided into position and stress conditions. Given a set of points $\boldsymbol{p}$, let $\bar{\boldsymbol{p}} \in \mathbb{R}^{n\times(d+1)}=[\boldsymbol p, \bold{1}_n]$. If and only if $n\geq d+1$ and $rank(\bar{\boldsymbol p})=d+1$, $\boldsymbol p$ \textit{affinely span} $\mathbb{R}^d$. For the formation $(\mathcal{G}, \boldsymbol{r})$, assume that $\boldsymbol{r}$ \textit{affinely span} $\mathbb{R}^d$. The position condition for \textit{affine localizability} is as follows:
   \begin{lemma}
   \label{lemma1}
    The formation $(\mathcal{G}, \boldsymbol r)$ if \textit{affinely localizability} if and only if $\boldsymbol r$ \textit{affinely span} $\mathbb{R}^d$.
   \end{lemma}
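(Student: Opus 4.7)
The plan is to prove both directions of the biconditional by translating ``affinely localizable'' into a concrete linear-algebraic condition on the augmented matrix $\bar{\boldsymbol{r}} := [\boldsymbol{r},\, \bold{1}_n]$. The key observation is that $\boldsymbol{p} \in \mathcal{A}(\boldsymbol{r})$ is equivalent to the existence of an affine pair $(\bold{A}, \bold{B})$ with $\bold{A}\boldsymbol{r}_i + \bold{B} = \boldsymbol{p}_i$ for every $i \in \mathcal{V}$, so affine localizability reduces to asking whether $(\bold{A}, \bold{B})$ is uniquely pinned down by the leader positions $\boldsymbol{p}_l$ alone; every subsequent step is a question about the rank of a leader sub-block.

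\emph{Sufficiency.} Assuming $\boldsymbol{r}$ affinely spans $\mathbb{R}^d$, we have $\mathrm{rank}(\bar{\boldsymbol{r}}) = d+1$. I would extract $d+1$ indices whose augmented vectors $[\boldsymbol{r}_i^\top, 1]^\top$ are linearly independent. Under the standing leader-follower convention (inherited from \cite{Zhao2018}, which places those $d+1$ spanning points among the leaders), the system $\bold{A}\boldsymbol{r}_{l,i} + \bold{B} = \boldsymbol{p}_{l,i}$, $i \in \mathcal{V}_l$, admits a unique solution $(\bold{A}, \bold{B})$. Substituting into $\boldsymbol{p}_{f,j} = \bold{A}\boldsymbol{r}_{f,j} + \bold{B}$ for every $j \in \mathcal{V}_f$ then determines $\boldsymbol{p}_f$ uniquely, giving affine localizability.

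\emph{Necessity.} I would argue the contrapositive. If $\boldsymbol{r}$ fails to affinely span $\mathbb{R}^d$, then $\mathrm{rank}(\bar{\boldsymbol{r}}) < d+1$, so the null space of $\bar{\boldsymbol{r}}^\top$ is non-trivial. This yields a nonzero pair $(\Delta\bold{A}, \Delta\bold{B})$ with $\Delta\bold{A}\,\boldsymbol{r}_i + \Delta\bold{B} = \boldsymbol{0}$ for every agent $i$, leaders included. Starting from any valid $(\bold{A}, \bold{B})$ realizing a chosen $\boldsymbol{p}_l$, the shifted pair $(\bold{A}+\Delta\bold{A},\, \bold{B}+\Delta\bold{B})$ reproduces the same $\boldsymbol{p}_l$ but, for a generic formation, alters at least one follower coordinate. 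This exhibits two distinct completions $\boldsymbol{p}_f$ compatible with the same $\boldsymbol{p}_l$, contradicting unique localizability.

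The main obstacle is the gap between the full-configuration hypothesis on $\boldsymbol{r}$ and the leader sub-configuration that actually drives reconstruction: strictly speaking, unique inversion of the leader equations requires $\bar{\boldsymbol{r}}_l$ to have full column rank, and if every maximal spanning subset of $\boldsymbol{r}$ were forced to include a follower, the sufficiency direction would fail. I would therefore open the proof by making this convention explicit, citing \cite{Zhao2018} for its standard use, and then run the two arguments above. Beyond that, the linear-algebraic steps are routine once the rank condition is correctly located on the leader block.
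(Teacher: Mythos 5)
The paper never proves this lemma itself---it imports it as the ``position condition'' from \cite{Zhao2018}---so your proposal has to be judged on its own terms, and as written it has two genuine problems. First, the lemma's displayed condition ``$\boldsymbol{r}$ affinely spans $\mathbb{R}^d$'' is evidently a typo: the paper assumes one sentence earlier that the full configuration $\boldsymbol{r}$ affinely spans $\mathbb{R}^d$, which would make the lemma vacuous, and the way Lemma~\ref{lemma1} is actually invoked in the proof of Theorem~\ref{theo1} shows the intended condition is that the \emph{leader} positions $\{\boldsymbol{r}_i\}_{i\in\mathcal{V}_l}$ affinely span $\mathbb{R}^d$. You sensed this tension (your closing paragraph names it) but resolved it the wrong way: in the sufficiency direction you extract $d+1$ affinely independent points from the full configuration and then posit a ``convention'' that they lie among the leaders. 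That is not a convention; it is precisely the hypothesis the lemma is about, and without it sufficiency is false---take three collinear leaders in $\mathbb{R}^2$ with followers off that line: $\boldsymbol{r}$ affinely spans $\mathbb{R}^2$, yet a shear fixing the leaders' line pointwise moves the followers, so $\boldsymbol{p}_f$ is not determined by $\boldsymbol{p}_l$.

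Second, your necessity argument is internally inconsistent. You build $(\Delta\mathbf{A},\Delta\mathbf{B})$ from the null space of $\bar{\boldsymbol{r}}$, so that $\Delta\mathbf{A}\,\boldsymbol{r}_i+\Delta\mathbf{B}=\boldsymbol{0}$ for \emph{every} agent $i$; but then $(\mathbf{A}+\Delta\mathbf{A},\,\mathbf{B}+\Delta\mathbf{B})$ reproduces every position, followers included, and exhibits no second completion of $\boldsymbol{p}_f$. The appeal to ``a generic formation'' cannot rescue a perturbation that you have just required to vanish on all agents. The correct argument runs the same linear algebra on the leader block only: if $\{\boldsymbol{r}_i\}_{i\in\mathcal{V}_l}$ fails to affinely span, pick a nonzero $(\Delta\mathbf{A},\Delta\mathbf{B})$ annihilating only the augmented leader rows $[\boldsymbol{r}_i^{\top},1]$, $i\in\mathcal{V}_l$; the standing assumption that the full $\boldsymbol{r}$ affinely spans (so $\bar{\boldsymbol{r}}$ has full column rank $d+1$) then forces $\Delta\mathbf{A}\,\boldsymbol{r}_j+\Delta\mathbf{B}\neq\boldsymbol{0}$ for some follower $j$, which is exactly the second completion you need. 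With the spanning hypothesis relocated to the leader sub-block in both directions, your overall strategy---reducing localizability to unique solvability of $\mathbf{A}\boldsymbol{r}_i+\mathbf{B}=\boldsymbol{p}_i$ over $i\in\mathcal{V}_l$---is the right one and matches the standard proof in \cite{Zhao2018}.
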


  \subsection{Stress Matrices and Affine Maneuver}
   For formation $(\mathcal{G}, \boldsymbol{r})$, a stress is denoted as $\{\omega_i^j\}_{(i, j\in \mathcal{E})}$. According to Ref. \cite{Connelly2005}, a stress is called an equilibrium stress if it satisfies
   \begin{equation}
   \label{e1}
    \sum_{j\in \mathcal{N}_i}\omega_i^j(\boldsymbol{p}_j-\boldsymbol{p}_i)=0,\quad i\in \mathcal{V}
   \end{equation}
   To simplify the expression of Eq.(\ref{e1}), the stress matrix is defined as $\mathbf\Omega \in \mathbb{R}^{n\times n}$ which satisfies
   \begin{equation}
   \label{e2}
    \left\{\begin{array}{ll}
    0,&i\neq j,(i,j)\notin \mathcal{E}\\
    -\omega_i^j,&i\neq j, (i, j)\notin \mathcal{E}\\
    \sum_{k\in \mathcal{N}_i}\omega_i^k, &i=j
    \end{array}\right.
   \end{equation}
   Then, the Eq.(\ref{e1}) can be expressed in a matrix form as 
   \begin{equation}
   \label{e3}
    (\boldsymbol{\Omega} \otimes \boldsymbol{I}_d)\boldsymbol{p}=\mathbf{0}
   \end{equation}
   In this paper, denote $\boldsymbol{\bar{\Omega}}=\boldsymbol{\Omega} \otimes \boldsymbol{I}_d$ for notational simplicity. Partition $\boldsymbol{\bar{\Omega}}$ according to the partition of leaders and followers as $\boldsymbol{\bar{\Omega}}=\begin{bmatrix}\boldsymbol{\bar{\Omega}}_{ll} & \boldsymbol{\bar{\Omega}}_{lf} \\ \boldsymbol{\bar{\Omega}}_{fl} & \boldsymbol{\bar{\Omega}}_{ff}                                                                                                                                                                                                                                                            \end{bmatrix}$. The stress condition for \textit{affine localizability} is as follows:
   \begin{lemma}
     \label{l1} The nominal formation ($\mathcal{G}, \boldsymbol{r}$) is affinely localizable if and only if $\boldsymbol{\bar \Omega}_{ff}$ is nonsingular. When $\boldsymbol{\bar \Omega}_{ff}$ is nonsingular, for any $\boldsymbol{p}=[\boldsymbol{p}_l^T, \boldsymbol{p}_f^T]^T\in \mathcal{A}(\boldsymbol r)$, $\boldsymbol{p}_f$ can be uniquely calculated as $\boldsymbol{p}_f=-\boldsymbol{\bar \Omega}_{ff}^{-1}\boldsymbol{\bar \Omega}_{fl}\boldsymbol{p}_l$.
    \end{lemma}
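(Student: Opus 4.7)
The plan is to distil the lemma into a partitioned linear system by first showing that every affine image lies in $\ker\boldsymbol{\bar{\Omega}}$, then reading off the follower block to obtain the closed form.

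First I would verify the invariance $\boldsymbol{\bar{\Omega}}\boldsymbol{p}=0$ for every $\boldsymbol{p}\in\mathcal{A}(\boldsymbol{r})$. Substituting $\boldsymbol{p}=(\boldsymbol{I}_n\otimes\boldsymbol{A})\boldsymbol{r}+\boldsymbol{1}_n\otimes\boldsymbol{B}$ and invoking the Kronecker mixed-product identity splits $\boldsymbol{\bar{\Omega}}\boldsymbol{p}$ into the two summands $(\boldsymbol{I}_n\otimes\boldsymbol{A})\boldsymbol{\bar{\Omega}}\boldsymbol{r}+(\boldsymbol{\Omega}\boldsymbol{1}_n)\otimes\boldsymbol{B}$. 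The first summand vanishes because $\boldsymbol{r}$ is an equilibrium configuration for the stress (equation \eqref{e3} applied to $\boldsymbol{r}$), and the second vanishes because the row sums of $\boldsymbol{\Omega}$ are zero by construction in \eqref{e2}. This establishes the inclusion $\mathcal{A}(\boldsymbol{r})\subseteq\ker\boldsymbol{\bar{\Omega}}$.

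For sufficiency, I would partition the identity $\boldsymbol{\bar{\Omega}}\boldsymbol{p}=0$ into its leader and follower blocks; the follower row reads $\boldsymbol{\bar{\Omega}}_{fl}\boldsymbol{p}_l+\boldsymbol{\bar{\Omega}}_{ff}\boldsymbol{p}_f=0$, and nonsingularity of $\boldsymbol{\bar{\Omega}}_{ff}$ immediately yields the asserted closed form $\boldsymbol{p}_f=-\boldsymbol{\bar{\Omega}}_{ff}^{-1}\boldsymbol{\bar{\Omega}}_{fl}\boldsymbol{p}_l$, which gives uniqueness and hence affine localizability. For necessity I would argue by contrapositive: assuming $\boldsymbol{\bar{\Omega}}_{ff}\boldsymbol{v}=\boldsymbol{0}$ for some nonzero $\boldsymbol{v}$, exhibit two distinct configurations in $\mathcal{A}(\boldsymbol{r})$ sharing the same leader positions. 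Because $\mathcal{A}(\boldsymbol{r})$ is a linear subspace of dimension $d(d+1)$ under the affine-span hypothesis of Lemma \ref{lemma1}, and $\ker\boldsymbol{\bar{\Omega}}$ has the same dimension when the stress carries the maximal admissible rank $n-d-1$, the inclusion $\mathcal{A}(\boldsymbol{r})\subseteq\ker\boldsymbol{\bar{\Omega}}$ becomes an equality, and the leader-projection map $\boldsymbol{p}\mapsto\boldsymbol{p}_l$ restricted to $\ker\boldsymbol{\bar{\Omega}}$ has a nontrivial kernel exactly when $\boldsymbol{\bar{\Omega}}_{ff}$ is rank-deficient.

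The main obstacle will be the necessity step: a kernel vector of $\boldsymbol{\bar{\Omega}}_{ff}$ by itself does not produce a second affine image, because one must also control the cross-block $\boldsymbol{\bar{\Omega}}_{lf}\boldsymbol{v}$ to keep the perturbed configuration inside $\ker\boldsymbol{\bar{\Omega}}$. I plan to sidestep an explicit construction of the alternate $(\boldsymbol{A},\boldsymbol{B})$ by matching dimensions of $\mathcal{A}(\boldsymbol{r})$ and $\ker\boldsymbol{\bar{\Omega}}$ to obtain their equality, and then applying a rank--nullity count to the partitioned system to equate injectivity of the leader projection with invertibility of $\boldsymbol{\bar{\Omega}}_{ff}$, closing the equivalence.
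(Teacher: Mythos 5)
The paper states this lemma without proof, importing it from \cite{Zhao2018}, so the comparison below is against the standard argument in that reference. Your opening steps are correct and are exactly how that proof goes: the inclusion $\mathcal{A}(\boldsymbol r)\subseteq\ker\boldsymbol{\bar\Omega}$ follows from the mixed-product identity together with $\boldsymbol{\bar\Omega}\boldsymbol r=\boldsymbol 0$ and the zero row sums of $\boldsymbol\Omega$, and the sufficiency direction plus the closed form $\boldsymbol p_f=-\boldsymbol{\bar\Omega}_{ff}^{-1}\boldsymbol{\bar\Omega}_{fl}\boldsymbol p_l$ drop out of the follower block row of $\boldsymbol{\bar\Omega}\boldsymbol p=\boldsymbol 0$.

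The necessity direction, however, contains a genuine gap, and it is precisely the obstacle you flagged: a nonzero $\boldsymbol v\in\ker\boldsymbol{\bar\Omega}_{ff}$ produces a second configuration in $\mathcal{A}(\boldsymbol r)$ with the same leader positions only if $[\boldsymbol 0^T,\boldsymbol v^T]^T\in\ker\boldsymbol{\bar\Omega}$, which additionally requires $\boldsymbol{\bar\Omega}_{lf}\boldsymbol v=\boldsymbol 0$. Your proposed repair, matching $\dim\mathcal{A}(\boldsymbol r)=\dim\ker\boldsymbol{\bar\Omega}=d(d+1)$ and then applying rank--nullity, does not close this. The equality $\mathcal{A}(\boldsymbol r)=\ker\boldsymbol{\bar\Omega}$ is fine under the rank-$(n-d-1)$ hypothesis, but the kernel of the leader projection restricted to $\ker\boldsymbol{\bar\Omega}$ is $\{[\boldsymbol 0^T,\boldsymbol v^T]^T:\boldsymbol v\in\ker\boldsymbol{\bar\Omega}_{ff}\cap\ker\boldsymbol{\bar\Omega}_{lf}\}$, and no dimension count relates that intersection to $\ker\boldsymbol{\bar\Omega}_{ff}$ alone: for a general matrix with zero row sums the intersection can be trivial while $\boldsymbol{\bar\Omega}_{ff}$ is singular, so injectivity of the leader projection is not equivalent to invertibility of $\boldsymbol{\bar\Omega}_{ff}$ by counting arguments. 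The missing ingredient is the structural property recorded in Lemma \ref{l3}: the stress matrix is symmetric positive semidefinite. For a positive semidefinite $\boldsymbol{\bar\Omega}$, $\boldsymbol w^T\boldsymbol{\bar\Omega}\boldsymbol w=0$ forces $\boldsymbol{\bar\Omega}\boldsymbol w=\boldsymbol 0$; taking $\boldsymbol w=[\boldsymbol 0^T,\boldsymbol v^T]^T$ gives $\boldsymbol w^T\boldsymbol{\bar\Omega}\boldsymbol w=\boldsymbol v^T\boldsymbol{\bar\Omega}_{ff}\boldsymbol v=0$ whenever $\boldsymbol{\bar\Omega}_{ff}\boldsymbol v=\boldsymbol 0$, hence $\boldsymbol{\bar\Omega}_{lf}\boldsymbol v=\boldsymbol 0$ automatically and the second configuration exists. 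With that one observation inserted in place of the dimension-matching step, your argument coincides with the proof in \cite{Zhao2018}.
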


 \subsection{Problem Statement} 
  During the process of a formation navigating through any given environment $E$, the formation shape is often constrained by the characteristics of the environment $E$. For a reconfigurable hierarchical formation $(\mathcal{G}^t, \boldsymbol{r}^t)$, dynamic reorganization of the leader layer typically implies different travel advantages under the environment  $E$. These advantages may include the shortest  travel time, optimal formation visibility, the most stable communication, and so on. Building on this, we construct an optimization equation according to the travel advantages associated with formation reorganization:
     
   \begin{figure*} 
    \raggedright
	 \includegraphics[width=1\linewidth]{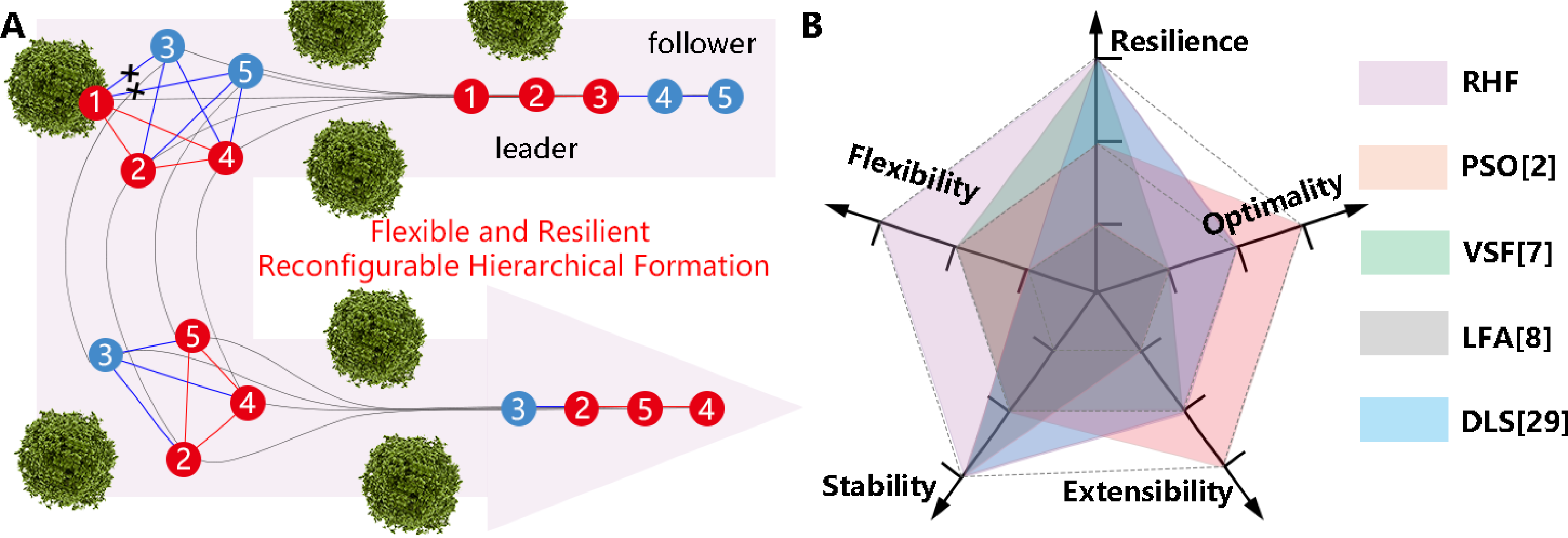} 
        \caption{A.The system architecture of reconfigurable hierarchical formation. B.Comparison of with leader-follower affine formation(LFA)\cite{Zhao2018}, virtual structure formation(VSF)\cite{Zhou2018}, dynamic leader selection(DLS)\cite{Li2017}, particle swarm optimization(PSO)\cite{Zhou2022} and our proposed method RHF. The radar chart shows that moving outward from the center represents increasing corresponding versatile. Specifically, stability represents the ability of the formation to maintain its standard shape during movement. Resilience represents the formation's ability to recover when individual agents lose connectivity. Flexibility represents the formation's adaptability to complex environments. Optimality, as defined in \cite{Zhou2022}, represents the ability to seek optimal formation in spatial, temporal, or other user-defined scales. Moreover, extensibility represents the ability to analyze and model formations for specific tasks.}  
	\label{fig:overview} 
   \end{figure*}
  \begin{equation}
    \begin{aligned}
    J=\phi(\mathcal{G}^t,\boldsymbol r^t,E)\quad\quad\quad\quad\\ s.t.\quad det(\boldsymbol{\Omega}_{ff}^{\mathcal{G}^t})>0\quad\quad \boldsymbol r^t \in \mathcal{A}(\boldsymbol r^o)
    \end{aligned}
  \end{equation}
   where $\phi(\cdot)$ denotes the function that represents different maneuvering advantages under the current environmental conditions $E$. The condition $det(\Omega_{ff})>0$ represents the topology that must be satisfied to allow the positions of the follower layer to be determined by their neighbors. Meanwhile,  $\boldsymbol{r}^t \in \mathcal{A}(\boldsymbol{r}^o)$ specifies the positional constraints to guarantee the feasibility of the formation reorganization. This paper focuses on analyzing the constraints of the proposed optimization equation, aiming to develop a reconfigurable hierarchical formation that is both flexible and resilient.
  
   \section{Feasible Conditions For Reconfigurable Hierarchical Formation}
   This section comprehensively elaborates on the specific methods of reconfigurable hierarchical formation, divided into two distinct parts, including the two parts of hierarchical reorganization. Firstly, we define the reconfigurable hierarchical formation and analyze the necessary conditions for hierarchical reorganizations. To eliminate disturbances caused by other followers, a power-centric topology switching strategy is proposed to facilitate the smooth execution of hierarchical reorganizations. 
   
 \subsection{Reconfigurable Hierarchical Formation}
    \begin{figure}
    \raggedright
	 \includegraphics[width=1\linewidth]{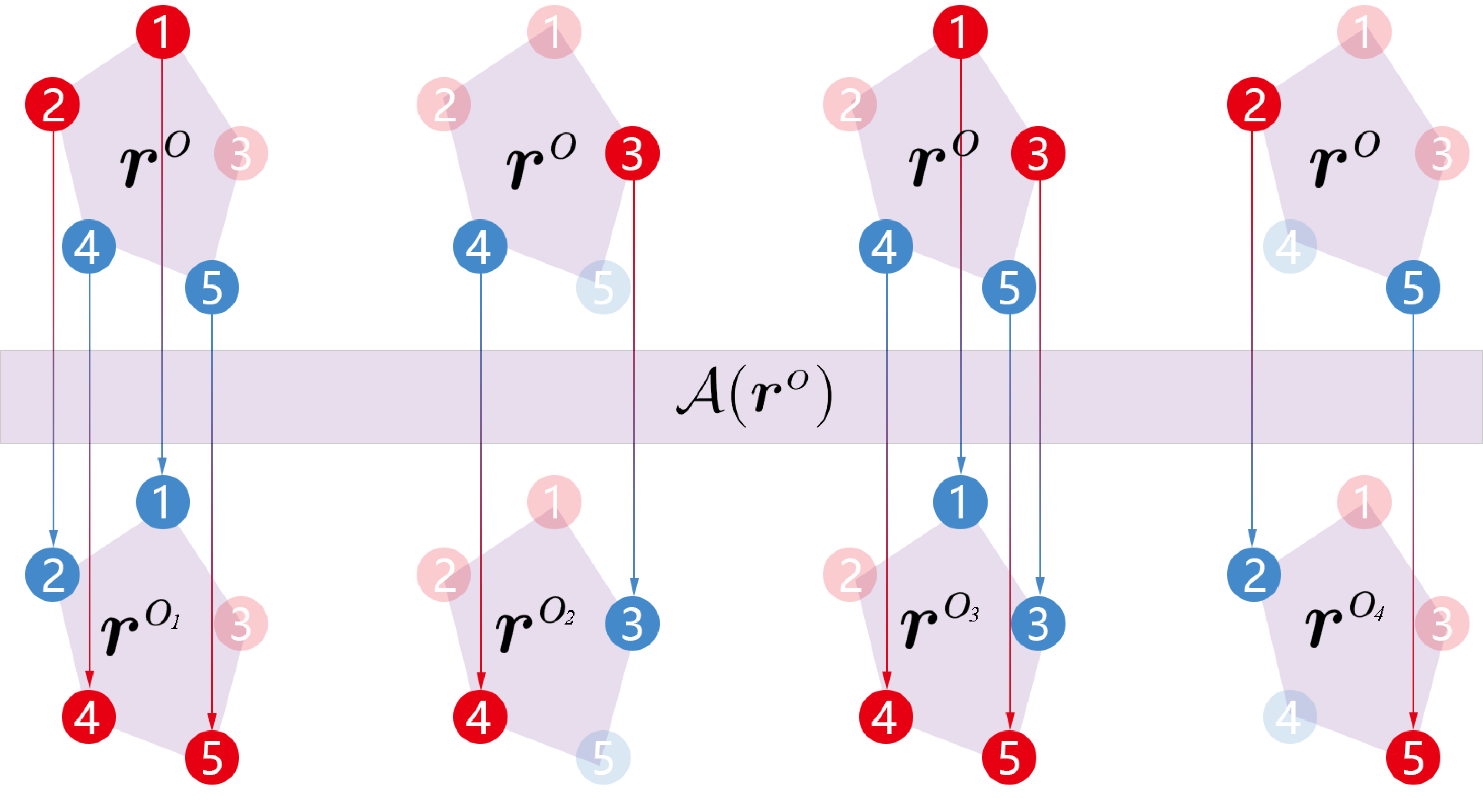} 
	 \caption{Illustration of Feasible Configurations for Reconfigurable Hierarchical Formation.}  
	\label{fig:dynamic_configuration} 
   \end{figure} 
   
   \textit{Dynamic affine localizability}, designed to accommodate hierarchical reconfiguration of the formation, means that the roles of agents within the formation can switch between leader and follower. Let the changing graph $\mathcal{G}^t=\mathcal{G}(\mathcal{V}(t), \mathcal{E}(t))$, we define \textit{dynamic affine localization} as:

    \begin{myDef}
     \label{Def1}(Dynamic Affine Localizability) For the changing formation $(\mathcal{G}^t, \boldsymbol r^t)$, the position of followers $\{\boldsymbol r_i\}_{i\in \mathcal{V}_f(t)}$ can be uniquely determined by the position of leaders $\{\boldsymbol r_i\}_{i\in \mathcal{V}_l(t)}$.
    \end{myDef}
    
    \begin{myAum}
     \label{Aum1}(Initial Formation for Dynamic Affine Localizability): For the initial formation $(\mathcal{G}^o, \boldsymbol r^o)$, assume that $\{\boldsymbol r^o_i\}_{i\in \mathcal{V}^o_l}$ affinely span $\mathbb{R}^d$.
    \end{myAum}
    
    \begin{theorem}
     \label{theo1}(Hierarchical Reorganization for Dynamic Affine Localizability): Under Assumption \ref{Aum1}, the changing formation $(\mathcal{G}^t, \boldsymbol r^t)$ is dynamically affine localizable if and only if $\{\boldsymbol r^t_i\}_{i\in \mathcal{V}^t_l}$ affinely span $\mathbb{R}^d$.
    \end{theorem}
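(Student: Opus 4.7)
The plan is to prove Theorem \ref{theo1} by reducing it to a snapshot application of the position condition behind Lemma \ref{lemma1}, exploited through the affine image identity $\boldsymbol{r}^t_i=\boldsymbol{A}\boldsymbol{r}^o_i+\boldsymbol{B}$ that holds for every agent $i$ because $\boldsymbol{r}^t\in\mathcal{A}(\boldsymbol{r}^o)$. At time $t$ the reorganized formation $(\mathcal{G}^t,\boldsymbol{r}^t)$ with partition $(\mathcal{V}^t_l,\mathcal{V}^t_f)$ is algebraically a static affine formation, and Definition \ref{Def1} asks exactly whether follower positions are uniquely recoverable from leader positions under this new partition. Since every $\boldsymbol{r}^t_j$ is determined by the common pair $(\boldsymbol{A},\boldsymbol{B})$, the whole question collapses onto whether the current leader positions pin down $(\boldsymbol{A},\boldsymbol{B})$ uniquely.

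For the sufficiency direction I would assume that $\{\boldsymbol{r}^t_i\}_{i\in\mathcal{V}^t_l}$ affinely spans $\mathbb{R}^d$. Because an affine map cannot increase the affine dimension of a point set, the preimage $\{\boldsymbol{r}^o_i\}_{i\in\mathcal{V}^t_l}$ must also affinely span $\mathbb{R}^d$. Stacking the leader relations $\boldsymbol{A}\boldsymbol{r}^o_i+\boldsymbol{B}=\boldsymbol{r}^t_i$ for $i\in\mathcal{V}^t_l$ gives a linear system in $(\boldsymbol{A},\boldsymbol{B})$ whose coefficient matrix, formed by the rows $[\boldsymbol{r}^o_i{}^T,1]$ for $i\in\mathcal{V}^t_l$, has rank $d+1$. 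Hence $(\boldsymbol{A},\boldsymbol{B})$ is uniquely recovered from the current leader positions, and every follower position $\boldsymbol{r}^t_j=\boldsymbol{A}\boldsymbol{r}^o_j+\boldsymbol{B}$ with $j\in\mathcal{V}^t_f$ is uniquely determined, which is exactly Definition \ref{Def1}.

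For the necessity direction I would argue by contrapositive. If $\{\boldsymbol{r}^t_i\}_{i\in\mathcal{V}^t_l}$ fails to affinely span $\mathbb{R}^d$, then under non-degeneracy of the current deformation the set $\{\boldsymbol{r}^o_i\}_{i\in\mathcal{V}^t_l}$ also fails to affinely span, so the stacked coefficient matrix has rank strictly less than $d+1$. This produces a nonzero pair $(\Delta\boldsymbol{A},\Delta\boldsymbol{B})$ annihilating every leader equation: $\Delta\boldsymbol{A}\boldsymbol{r}^o_i+\Delta\boldsymbol{B}=\mathbf{0}$ for all $i\in\mathcal{V}^t_l$. Assumption \ref{Aum1} forces $\boldsymbol{r}^o$ as a whole to affinely span $\mathbb{R}^d$, so the global map $(\boldsymbol{A},\boldsymbol{B})\mapsto\{\boldsymbol{A}\boldsymbol{r}^o_j+\boldsymbol{B}\}_{j=1}^n$ is injective and $(\Delta\boldsymbol{A},\Delta\boldsymbol{B})$ must move at least one agent; that agent cannot be a leader by construction, so it lies in $\mathcal{V}^t_f$. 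Two distinct elements of $\mathcal{A}(\boldsymbol{r}^o)$ with identical leader positions but different follower positions then contradict dynamic affine localizability.

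The main obstacle I expect is the reverse affine-dimension transfer used in the necessity step: moving from ``current leaders do not affinely span'' to ``initial positions of current leaders do not affinely span'' is not automatic from the non-expansion property alone, because a singular $\boldsymbol{A}$ could crush a spanning initial set into a degenerate current image. I would handle this by invoking that $\boldsymbol{r}^t$ itself is an admissible, non-degenerate configuration in $\mathcal{A}(\boldsymbol{r}^o)$ (inherited through the maneuvering process from Assumption \ref{Aum1}), so $\boldsymbol{A}$ acts invertibly on the affine hull and the two span conditions on the current leader subset are equivalent; this justifies the snapshot reduction to Lemma \ref{lemma1} and closes the equivalence claimed by the theorem.
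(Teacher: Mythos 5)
Your proof is correct and follows essentially the same route as the paper's: both reduce dynamic affine localizability to a snapshot-by-snapshot application of the static position condition of Lemma~\ref{lemma1} under the time-$t$ leader partition. The only difference is that you rederive that condition inline from the affine-image parametrization (unique recovery of $(\boldsymbol{A},\boldsymbol{B})$ when the current leaders affinely span, and an explicit pair of configurations agreeing on leaders but not on followers when they do not), whereas the paper simply cites Lemma~\ref{lemma1} at each instant; your version is if anything more complete, including the attention to the singular-$\boldsymbol{A}$ edge case that the paper leaves implicit.
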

    
    \begin{proof}(Sufficiency)
    According to Lemma \ref{lemma1}, the formation $(\mathcal{G}, \boldsymbol r)$ is \textit{affinely localizable} if and only if $\{\boldsymbol r_i\}_{i\in \mathcal{V}_l}$ affinely span $\mathbb{R}^d$. For any formation $(\mathcal{G}^{t_1}, \boldsymbol r^{t_1})\in (\mathcal{G}^{t}, \boldsymbol r^{t})$, $(\mathcal{G}^{t_1}, \boldsymbol r^{t_1})$ is \textit{affinely localizable} if $\{\boldsymbol r_i\}_{i\in \mathcal{V}_l}^{t_1}$ affinely span $\mathbb{R}^d$. Then, $(\mathcal{G}^t, \boldsymbol r^t)$ is \textit{dynamically affine localizable} consequently.
    
    (Necessity) For any formation $(\mathcal{G}^{t_1}, \boldsymbol r^{t_1})\in (\mathcal{G}^{t}, \boldsymbol r^{t})$, $(\mathcal{G}^{t_1}, \boldsymbol r^{t_1})$ cannot realize \textit{affine localizability} if $\{\boldsymbol r_i\}_{i\in \mathcal{V}_l}^{t_1}$ cannot \textit{affinely span} $\mathbb{R}^d$. Then, $(\mathcal{G}^t, \boldsymbol r^t)$ cannot satisfy \textit{dynamically affine localizability} consequently.
    \end{proof}
    
    Formations can better leverage the advantages of the group through hierarchical reorganization. We call formations with this characteristic a \textit{reconfigurable hierarchical formation}, which is defined as:
    
    \begin{myDef}
     \label{Def2}(Reconfigurable hierarchical formation) For the changing  formation $(\mathcal{G}^t, \boldsymbol r^t)$, any target configuration $\boldsymbol p^*$ of current nominal configuration $\boldsymbol r^t$ locates at the same affine image $\mathcal{A}(\boldsymbol r^t)$.
    \end{myDef}
    
    \begin{myAum} 
     \label{Aum2}(Initial Configuration for Versatile  Affine Formation) For any initial configuration $\boldsymbol r^o$, assume that $C(n, n_l) \geq 1$ where $C(n, n_l)$ counts the number of viable combinations for hierarchical reorganization.
    \end{myAum}
    
    \begin{theorem}
     \label{theo2} (Reconfigurable Affine Formable) Under Assumptions \ref{Aum1} and \ref{Aum2}, the dynamic configuration $\boldsymbol r^t$ is reconfigurable affine formable if and only if the initial configuration $\boldsymbol r^o$ satisfies that $\{\boldsymbol r^{o_j}\}_{j\in [1, C(n, n_l)]}\in \mathcal{A}(\boldsymbol r^o)$.
    \end{theorem}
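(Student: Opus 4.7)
The plan is to combine Theorem \ref{theo1} with the transitivity of affine image membership. By Definition \ref{Def2}, reconfigurable affine formability asserts that every admissible target configuration of the running nominal $\boldsymbol r^t$ lies in $\mathcal{A}(\boldsymbol r^t)$, so the task reduces to showing that this condition propagates through all $C(n,n_l)$ permissible role reassignments if and only if each reassigned initial configuration $\boldsymbol r^{o_j}$ itself lies in $\mathcal{A}(\boldsymbol r^o)$.

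For sufficiency, suppose $\{\boldsymbol r^{o_j}\}_{j\in[1,C(n,n_l)]}\subseteq \mathcal{A}(\boldsymbol r^o)$. Pick any hierarchical reorganization indexed by $j$. By Assumption \ref{Aum2} the reorganization is viable, so by Theorem \ref{theo1} the leader subset still affinely spans $\mathbb{R}^d$ and the reorganized formation is dynamically affinely localizable. Now I would argue that since $\boldsymbol r^{o_j} \in \mathcal{A}(\boldsymbol r^o)$, there exist matrices $\bold{A}_j, \bold{B}_j$ with $\boldsymbol r^{o_j} = (\boldsymbol I_n \otimes \bold A_j)\boldsymbol r^o + \bold 1 \otimes \bold B_j$; by composing this with any affine transformation admissible after the reorganization, any resulting $\boldsymbol p^* \in \mathcal{A}(\boldsymbol r^{o_j})$ remains in $\mathcal{A}(\boldsymbol r^o)$. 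Because the affine image relation is closed under composition, the running target $\boldsymbol p^*$ of $\boldsymbol r^t$ stays in $\mathcal{A}(\boldsymbol r^t)$, establishing Definition \ref{Def2}.

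For necessity, I would argue by contrapositive: if some $\boldsymbol r^{o_{j^*}} \notin \mathcal{A}(\boldsymbol r^o)$, then there is no pair $(\bold A, \bold B)$ realizing this reorganization as an affine transformation of the initial configuration. Consequently, switching the leadership to the $j^*$-th combination forces the formation to jump out of $\mathcal{A}(\boldsymbol r^o)$, so the target configuration after the switch violates the containment required by Definition \ref{Def2}. Hence reconfigurable affine formability fails.

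I expect the main obstacle to be the bookkeeping around what $C(n,n_l)$ actually counts: one must argue that the $C(n,n_l)$ enumerated reorganizations exhaust all role reallocations whose leader subset affinely spans $\mathbb{R}^d$ (so that Theorem \ref{theo1} applies uniformly), and also carefully justify the transitivity step in the sufficiency direction, since the composition of two affine maps with Kronecker-structured matrices $\boldsymbol I_n \otimes \bold A$ needs to stay in the same affine-image format. Once that algebraic closure is made explicit, the remainder of the argument is a direct application of Lemma \ref{lemma1} and Theorem \ref{theo1}.
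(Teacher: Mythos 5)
Your proposal is correct and follows essentially the same route as the paper: sufficiency via the invertibility of the Kronecker-structured affine map relating $\boldsymbol r^o$ and $\boldsymbol r^{o_j}$ and closure of such maps under composition (so that the pre-switch configuration lies in $\mathcal{A}(\boldsymbol r^{o_j})$ as well as $\mathcal{A}(\boldsymbol r^o)$), and necessity by the contrapositive that a reorganization outside $\mathcal{A}(\boldsymbol r^o)$ breaks the required containment. The ``transitivity'' step you flag as the main obstacle is precisely what the paper carries out explicitly, computing $(\boldsymbol I_n \otimes \boldsymbol A^m)(\boldsymbol I_n \otimes \boldsymbol A'^{-1}) = \boldsymbol I_n \otimes \boldsymbol A^m\boldsymbol A'^{-1}$ to keep the affine-image format.
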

    
    \begin{proof}(Sufficiency)
     if $\{\boldsymbol r^{o_j}\}_{j\in [1, C(n, n_l)]}\in \mathcal{A}(\boldsymbol r^o)$, there always exists $(\boldsymbol A', b')$ satisfying 
     \begin{equation}
     \label{e9}
      \boldsymbol r^{o_j} = (\boldsymbol I_n \otimes \boldsymbol A')\boldsymbol r^o + \boldsymbol 1_n \otimes b'
     \end{equation}
    Based on the property of affine transformation, the matrix $\boldsymbol A$ is invertible \cite{Lin2016}, which means the inverse of $\boldsymbol I_n \otimes \boldsymbol A'$ exists. Then, 
    \begin{equation}
    \label{e10}
     \boldsymbol r^o = (\boldsymbol I_n \otimes \boldsymbol A^{'-1})\boldsymbol r^{o_j} - \boldsymbol 1_n \otimes (\boldsymbol A^{'-1}\cdot b')
    \end{equation}
     Assume that the configuration $\boldsymbol p \in \mathcal{A}(\boldsymbol r^o)$ represents the pre-formation in the first leader switching, then there always exists $(\boldsymbol A^m, b^m)$ satisfying 
     \begin{equation}
     \label{e11}
      \boldsymbol p = (\boldsymbol I_n \otimes \boldsymbol A^m)\boldsymbol r^o + \boldsymbol 1_n \otimes b^m
     \end{equation}
    Substituting $\boldsymbol r^o$ with $(\boldsymbol I_n \otimes \boldsymbol A^{'-1})\boldsymbol r^{o_j} - \boldsymbol 1_n \otimes (\boldsymbol A^{'-1}\cdot b')$, then $\boldsymbol p$ can be reformulated as 
    \begin{equation}
    \label{e12}
     \boldsymbol p = (\boldsymbol I_n \otimes \boldsymbol A^m \boldsymbol A^{'-1})\boldsymbol r^{o_j} - \boldsymbol 1_n\otimes (b^m - \boldsymbol A^m \boldsymbol A^{'-1}b')
    \end{equation}
    Make $\boldsymbol A^{m'} =  \boldsymbol A^m \boldsymbol A^{'-1}$ and $b^{m'} = b^m - \boldsymbol A^m \boldsymbol A^{'-1}b'$, then 
    \begin{equation}
    \label{e13}
     \boldsymbol p' = \boldsymbol p = (\boldsymbol I_n \otimes \boldsymbol A^{m'})\boldsymbol r^{o_j} + \boldsymbol 1_n\otimes b^{m'}
    \end{equation}
    where $\boldsymbol p'$ represents the post-formation after switching leader. Since $\boldsymbol p'\in \mathcal{A}(\boldsymbol r^{o_j})$ and $\mathcal{A}(\boldsymbol r^o)=\mathcal{A}(\boldsymbol r^{o_j})$, the dynamic formation $(\mathcal{G}^t,\boldsymbol r^t)$ is affinely formable consequently.

    (Necessity) Taking the first leader switching as an example, there always exists $(\boldsymbol A^m, b^m)$ satisfying 
    \begin{equation}
    \label{e14}
     \boldsymbol p = (\boldsymbol I_n \otimes \boldsymbol A^m)\boldsymbol r^o + \boldsymbol 1_n \otimes b^m
    \end{equation}
    where $\boldsymbol p \in \mathcal{A}(\boldsymbol r^o)$ represents the pre-formation. Since $\boldsymbol p' = \boldsymbol{Mp}$ where $\boldsymbol p'$ and $\boldsymbol M$ represent the post-formation and the transfer matrix, respectively, then 
    \begin{equation}
    \label{e15}
     \boldsymbol p' = (\boldsymbol I_n\otimes \boldsymbol A')\boldsymbol{Mp} + \boldsymbol 1_n\otimes b'
    \end{equation}
    Substituting $\boldsymbol p$ with $\boldsymbol r^o$, then $\boldsymbol p'$ can be reconstructed as
    \begin{equation}
    \label{e16}
     \boldsymbol p' = (\boldsymbol I_n\otimes \boldsymbol A'\boldsymbol A^m)\boldsymbol{M}\boldsymbol r^o + \boldsymbol 1_n\otimes (\boldsymbol M\boldsymbol A'b^m + b')
    \end{equation}
    Since $\boldsymbol r^{o_j}=\boldsymbol M\boldsymbol r^o$, then 
    \begin{equation}
    \label{e17}
     \boldsymbol p' = (\boldsymbol I_n \otimes \boldsymbol A^{m'})\boldsymbol r^{o_j} + \boldsymbol 1_n\otimes b^{m'}
    \end{equation}
   where $\boldsymbol A^{m'} = (\boldsymbol I_n\otimes \boldsymbol A'\boldsymbol A^m)$ and $b^{m'} = \boldsymbol M\boldsymbol A'b^m + b'$. Then, $\boldsymbol p'\in \mathcal{A}(\boldsymbol r^{o_j})$. If $\mathcal{A}(\boldsymbol r^{o}) \neq \mathcal{A}(\boldsymbol r^{o_j})$, $\boldsymbol p'\notin \mathcal{A}(\boldsymbol r^{o_j})$. The dynamic formation $(\mathcal{G}^t,\boldsymbol r^t)$ cannot satisfy \textit{affine localizability} consequently.
    \end{proof}
    
   For a changing formation $\mathcal{G}^t$ satisfying Theorem \ref{theo1} and \ref{theo2}, we can identify a collection of viable formations from  $\{\boldsymbol r^{o_j}\}_{j\in [1, C(n, n_l)]}\in \mathcal{A}(\boldsymbol r^o)$ represented as $\mathbf{r}$.

   \subsection{Power-Centric Topology Switching}
    \begin{figure}[t!]
    \centering
	  \subfigure[\label{fig:2d_dynamic_graph}]{
      \includegraphics[width=1.0\linewidth]{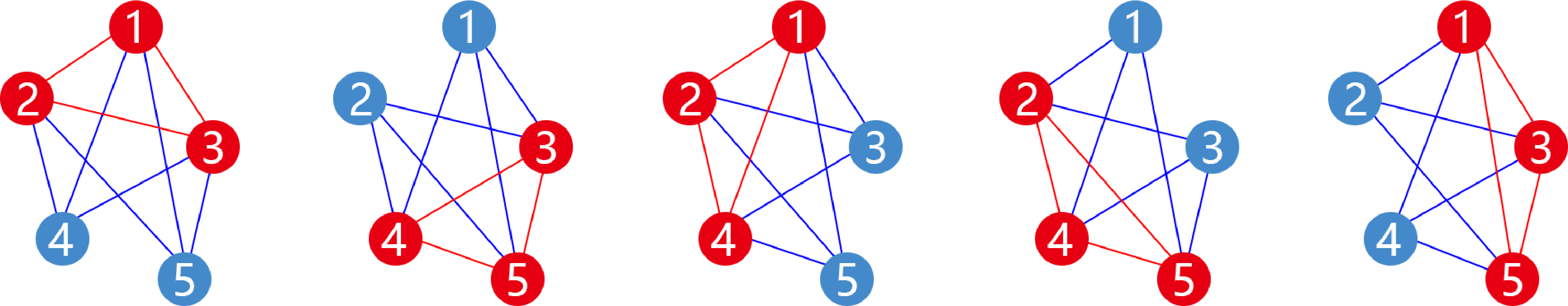}}
      \subfigure[\label{fig:3d_dynamic_graph}]{
      \includegraphics[width=1.0\linewidth]{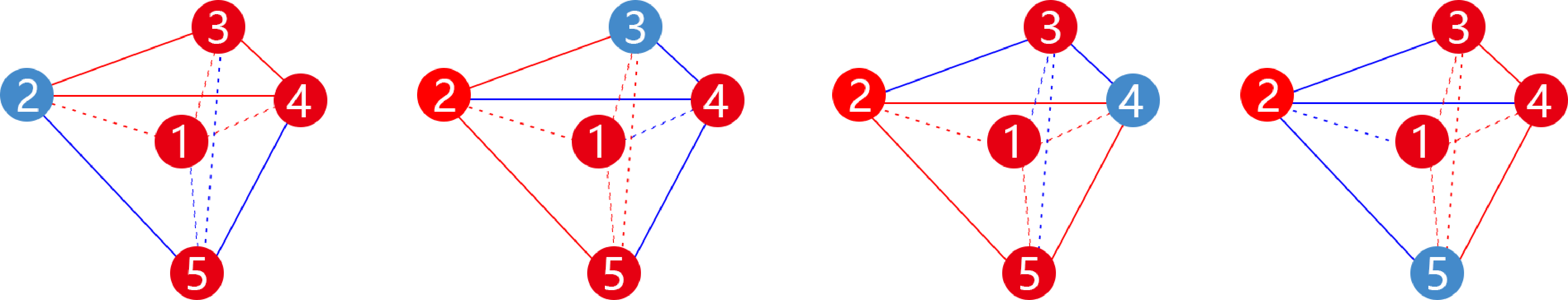}} 
      \label{fig:dynamic_graph} 
    \caption{Examples for 2d and 3d dynamic graph configurations.}
   \end{figure}       
   In hierarchical formation, control under affine transformations typically manifests as leaders and followers employing distinct strategies. The follower's position, inherently dependent on topological relationships, exhibits a coupling effect between the formation's inherent topological structures and the temporal difference, affecting the control accuracy. Considering the dynamic change of individual roles within the hierarchical reorganization, invariant topological connections can compromise control accuracy. 
   
   Firstly, the impact of formation topology on control precision is analyzed. Eq. (\ref{equa1}) presents the classical calculation formula for the follower's position.
   \begin{equation}
   \label{equa1}
    \dot{\boldsymbol x}_i = -\frac{1}{\sum_{j\in N_i}\omega_{ij}}\sum_{j\in N_i}\omega_{ij}(\boldsymbol x_i- \boldsymbol x_j-\dot{\boldsymbol x}_j)
   \end{equation}
   where $x_i$ and $x_j$ are the current positions of agent $i$ and $j$, respectively. Due to the presence of errors between the current position and the target position during actual flight, let $\bar{x}_i$ and $\bar{x}_j$ represent the target positions of i and j, respectively. Then, $\epsilon_i = x_i - \bar{x}_i$ and $\epsilon_j = x_j - \bar{x}_j$. Substituting these into Eq. (\ref{e1}), then
   \begin{equation}
   \label{equa2}
    \sum_{j\in{N_i}}\omega_{ij}(\boldsymbol x_i+\boldsymbol \epsilon_i-\boldsymbol x_j-\boldsymbol \epsilon_j)=0
   \end{equation}
    Furthermore, the stress formulation can be expressed as:
    \begin{equation}
     \boldsymbol \epsilon_i-\frac{\sum_{j\in{N_i}}\omega_{ij} \boldsymbol \epsilon_j}{\sum_{j\in{N_i}}\omega_{ij}}=-\frac{\sum_{j\in{N_i}}\omega_{ij}(\boldsymbol{x}_i-\boldsymbol x_j)}{\sum_{j\in{N_i}}\omega_{ij}}
    \end{equation}
    Based on this, Eq. (\ref{equa1}) can be reformulated as:
    \begin{equation}
    \label{equa3}
     (\boldsymbol{\epsilon_i}-\frac{\sum_{j\in{N_i}}\omega_{ij}{\boldsymbol{\epsilon}_j}}{\sum_{j\in{N_i}}\omega_{ij}})=-(\boldsymbol{\dot{\epsilon}_i}-\frac{\sum_{j\in{N_i}}\omega_{ij}\boldsymbol{\dot{\epsilon}_j}}{\sum_{j\in{N_i}}\omega_{ij}})
    \end{equation}
    
   From Eq. (\ref{equa3}), it can be derived that under the follower position calculation equation Eq. (\ref{equa1}), the taget position error of agent $i$ is related to the time and the position tracking errors of its neighbors. In response, the following part delves into a power-centric topological switching approach designed to mitigate the influence of topological connections in tandem with the hierarchical reorganization process.
   
   Between the factors of time and neighbors' position errors, selecting agents with minimal position errors as neighbors can reduce target position calculation errors. The tracking error for leaders is primarily influenced by control errors, whereas for followers, it is affected by both target position calculation errors and control errors. Assuming consistent control performance between leaders and followers, choosing followers as neighbors rather than leaders amplifies the target position calculation error. Furthermore, this calculation error increases with the number of follower neighbors. Based on this, we propose a feasible topology switching strategy to minimize position calculation errors and demonstrate its feasibility.
   \begin{lemma}
     \label{l3}(Generic Graph Configuration \cite{Lin2016}) For the dynamic formation $\mathcal{G}(\mathcal{V}(t), \mathcal{E}(t))$, if and only if $\mathcal{G}(\mathcal{V}(t), \mathcal{E}(t))$ is $(d+1)-rooted$, the stress matrix $\Omega$ is semi-positive and $rank(\Omega)=n-d-1$.
    \end{lemma}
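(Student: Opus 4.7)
The plan is to establish the equivalence by linking the kernel structure of the stress matrix $\Omega$ to the combinatorial connectivity of $\mathcal{G}$, drawing on tools from universal rigidity theory. First I would recall a standard fact: for any configuration $\boldsymbol{r}$ that affinely spans $\mathbb{R}^d$, the equilibrium condition (\ref{e1}) together with the row-sum-zero property of $\Omega$ implies that the columns of $[\boldsymbol{r},\boldsymbol{1}_n]\in\mathbb{R}^{n\times(d+1)}$ all lie in $\ker\Omega$, yielding the universal upper bound $\text{rank}(\Omega)\leq n-d-1$. The rank claim in the lemma is therefore the assertion that this bound is tight, and PSD-ness is an additional structural property that must be argued separately.

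For the sufficiency direction ($(d{+}1)$-rootedness implies maximal rank and PSD), I would construct an equilibrium stress supported on a union of $(d+1)$ internally vertex-disjoint paths emanating from a common $(d+1)$-subset of roots, whose existence is guaranteed precisely by the $(d+1)$-rootedness hypothesis. For a generic placement of these roots in $\mathbb{R}^d$, a dimension count shows the resulting stress matrix has null space exactly $\mathrm{span}[\boldsymbol{r},\boldsymbol{1}_n]$, attaining the bound. Positive semidefiniteness would then follow by exhibiting $\Omega$ as a Gram-type matrix through a Gale-dual configuration in $\mathbb{R}^{n-d-1}$, which is the standard bridge between combinatorial rootedness and PSD equilibrium stresses in the universal-rigidity literature.

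For the necessity direction, I would argue the contrapositive using Menger's theorem. If $\mathcal{G}$ fails to be $(d+1)$-rooted, then some vertex cut of size at most $d$ separates a nonempty subset $S\subset\mathcal{V}$ from every candidate root set. Using this cut I would exhibit a vector in $\ker\Omega$ supported on $S$ and linearly independent of the columns of $[\boldsymbol{r},\boldsymbol{1}_n]$, by propagating a nonzero assignment on $S$ and using the cut vertices to absorb the stress contributions. This forces $\text{rank}(\Omega)<n-d-1$ and contradicts the rank hypothesis.

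The hardest step will be the PSD claim in the sufficiency part: maximal rank of an equilibrium stress does not in general imply definiteness, and establishing PSD-ness typically requires either an explicit Gale-dual Gram realization or a perturbation-continuity argument within the cone of equilibrium stresses. I would therefore invoke the Gale-transform construction from \cite{Lin2016, Connelly2005} for this step, rather than attempt to reprove PSD-ness directly from combinatorial rootedness.
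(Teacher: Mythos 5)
The paper does not prove this lemma at all: it is imported verbatim from \cite{Lin2016} as a known result (note the citation in the lemma's own title), so there is no in-paper argument to compare yours against. Your proposal is therefore best judged against the literature it is reconstructing, and on that footing the overall architecture is right --- the kernel of $\Omega$ containing the columns of $[\boldsymbol{r},\boldsymbol{1}_n]$ and hence $\mathrm{rank}(\Omega)\leq n-d-1$, the reduction of the lemma to tightness of that bound plus positive semidefiniteness, and the use of $(d+1)$-rootedness as the combinatorial hypothesis are all faithful to the actual development in \cite{Lin2016}. One caveat on the statement itself: the result is an existence claim for \emph{generic} configurations (there exists an equilibrium stress that is PSD of rank $n-d-1$), not a claim about every equilibrium stress of the graph; your write-up drifts between the two readings, and the sufficiency construction only makes sense under the existential reading.

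Two steps in your sketch are genuine gaps rather than omitted routine detail. First, the PSD claim: you correctly identify it as the hardest step and then resolve it by ``invoking the Gale-transform construction from \cite{Lin2016, Connelly2005}'' --- but that is precisely the content of the lemma you are asked to prove, so the proposal is circular at its crux. A self-contained argument would need the inductive edge-addition / stress-perturbation machinery of \cite{Lin2016} (building the PSD stress root-set by root-set while controlling the rank), which your path-union construction does not supply: a stress supported on $d+1$ vertex-disjoint paths generally has rank far below $n-d-1$ and there is no reason for it to be PSD. Second, the necessity direction: a vector $x$ supported on the separated set $S$ does not automatically lie in $\ker\Omega$, because the rows of $\Omega$ indexed by the cut vertices and by the neighbors of $S$ impose constraints $\sum_{j\in S}\Omega_{ij}x_j=0$ that your phrase ``using the cut vertices to absorb the stress contributions'' does not discharge; the actual necessity proof in \cite{Lin2016} instead bounds the rank of a structured submatrix of $\Omega$ determined by the small vertex cut. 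As written, the proposal is a plausible roadmap but not a proof.
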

   
   \begin{myAum}
    \label{a3} For the dynamic formation $\mathcal{G}(\mathcal{V}(t), \mathcal{E}(t))$, $\mathcal{G}(\mathcal{V}(t), \mathcal{E}(t))$ is $(d+1)-rooted$.
   \end{myAum}
   
   \begin{theorem}
    \label{t3} (Topology Configuration for Reconfigurable Affine Localizability) Under the Assumption \ref{a3}, for the dynamic formation $\mathcal{G}(\mathcal{V}(t), \mathcal{E}(t))$, when each follower is topologically connected only to leaders, $\mathcal{G}(\mathcal{V}(t), \mathcal{E}(t))$ can achieve reconfigurable affine localizability.
   \end{theorem}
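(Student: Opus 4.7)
The plan is to reduce the claim to the stress-matrix criterion of Lemma \ref{l1}, combine it with Theorems \ref{theo1} and \ref{theo2} to capture the reconfigurability, and then exploit the topology hypothesis to establish that the follower block $\bar{\boldsymbol\Omega}_{ff}(t)$ is nonsingular at every instant. First I would observe that when each follower is adjacent only to leaders, the off-diagonal entries of $\bar{\boldsymbol\Omega}_{ff}(t)$ vanish by Eq.~(\ref{e2}), because $(i,j)\notin\mathcal{E}(t)$ for every pair $i,j\in\mathcal{V}_f(t)$. Consequently $\bar{\boldsymbol\Omega}_{ff}(t)$ becomes block-diagonal with $i$-th diagonal block $\bigl(\sum_{k\in\mathcal{N}_i(t)}\omega_i^k\bigr)\boldsymbol{I}_d$, where $\mathcal{N}_i(t)\subseteq\mathcal{V}_l(t)$, so invertibility reduces to showing that each scalar weight-sum is nonzero.

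Next I would invoke Assumption \ref{a3} together with Lemma \ref{l3}: the $(d+1)$-rooted property guarantees that every follower receives at least $d+1$ internally disjoint directed paths from the leader set, so in particular its in-neighbor set is nonempty, and Lemma \ref{l3} ensures that equilibrium stresses rendering $\boldsymbol\Omega$ positive semidefinite with rank $n-d-1$ actually exist. Selecting such stresses, which are strictly positive on every leader-to-follower edge, forces each scalar sum $\sum_{k\in\mathcal{N}_i(t)}\omega_i^k$ to be strictly positive, so $\bar{\boldsymbol\Omega}_{ff}(t)$ is invertible and Lemma \ref{l1} then delivers affine localizability of the instantaneous snapshot $(\mathcal{G}^{t},\boldsymbol r^{t})$.

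To upgrade instantaneous localizability to the reconfigurable version, I would appeal to Theorem \ref{theo1}, whose hypothesis that the current leader set affinely spans $\mathbb{R}^d$ follows from Assumption \ref{Aum1} combined with the $(d+1)$-rooted structure, and to Theorem \ref{theo2}, which places all admissible role assignments inside the common affine image $\mathcal{A}(\boldsymbol r^{o})$. The crucial observation is that the topology hypothesis is preserved by role exchange: after each hierarchical reorganization the relabelled followers are again required to be adjacent only to the new leaders, so the block-diagonal form of $\bar{\boldsymbol\Omega}_{ff}(t)$ reappears for the new partition and the previous argument applies verbatim.

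The main obstacle I foresee lies precisely in this last step, namely verifying that a \emph{single} consistent stress assignment can be chosen so that the diagonal sums of $\bar{\boldsymbol\Omega}_{ff}(t)$ remain nonzero under every admissible relabelling of $\mathcal{V}_l(t)$ and $\mathcal{V}_f(t)$ permitted by Theorem \ref{theo2}. This is a combinatorial consistency requirement on the weights $\{\omega_i^j\}$, and a clean argument will most likely proceed either by showing that stresses compatible with the $(d+1)$-rooted equilibrium conditions generically avoid the measure-zero locus on which some diagonal sum would vanish, or by replicating the stress construction of \cite{Lin2016} on each relabelled graph so as to obtain a time-varying yet always admissible weight assignment.
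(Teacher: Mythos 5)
Your overall architecture matches the paper's: exploit the absence of follower--follower edges to make $\boldsymbol{\Omega}_{ff}$ diagonal with $i$-th entry $\sum_{k\in\mathcal{N}_i}\omega_i^k$, show each such sum is nonzero, and conclude nonsingularity via Lemma \ref{l1}. However, there is a genuine gap at the decisive step. You justify the nonvanishing of the diagonal sums by asserting that the equilibrium stresses guaranteed by Lemma \ref{l3} are ``strictly positive on every leader-to-follower edge.'' Nothing in Lemma \ref{l3} gives you sign control: positive semidefiniteness of $\boldsymbol{\Omega}$ constrains the matrix, not the individual edge weights, and equilibrium stresses of generic configurations routinely carry mixed signs --- for instance, whenever a follower lies outside the convex hull of its leader neighbors, the equilibrium condition $\sum_j\omega_i^j(\boldsymbol{r}_j-\boldsymbol{r}_i)=0$ forces at least one negative weight. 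So positivity of each $\sum_k\omega_i^k$ cannot be assumed, and your argument for invertibility collapses at exactly the point that carries all the content.

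The paper closes this gap differently (Proposition \ref{p1}): it argues by contradiction that $\sum_j\omega_i^j=0$ is impossible. If the sum vanished, the equilibrium condition for follower $i$ would reduce to a linear relation $\bar{\boldsymbol{r}}^T\boldsymbol{\omega}=0$ on the augmented leader-position matrix $\bar{\boldsymbol{r}}=[\boldsymbol{r}_l,\mathbf{1}]$, and since the leaders affinely span $\mathbb{R}^d$ this matrix has rank $d+1$, forcing the stress vector on follower $i$'s edges to be trivial, contradicting the nontriviality required by Eq.~(\ref{e2}). The essential ingredient is thus the affine-spanning property of the leader positions combined with the equilibrium equations --- not any sign condition on the weights. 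Separately, the obstacle you flag in your final paragraph (finding a single stress assignment consistent across all relabellings) is not actually where the difficulty lies: the paper simply recomputes the stress for each snapshot of the switching topology, and the nonvanishing argument applies verbatim to each snapshot independently.
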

   
   \begin{proof}
    When $\mathcal{G}(\mathcal{V}(t),\mathcal{E}(t))$ is $(d+1)-rooted$, there exists at least $(d+1)-neighboor$ for each follower. As each follower is topologically connected only to leaders, each follower will be topologically connected to at least $(d+1)$ leaders. Under this circumstance, the adjacent matrix $\boldsymbol{D}$ can be reconstructed as $\boldsymbol{D}=\begin{bmatrix}\boldsymbol{D_{ll}} & \boldsymbol{D_{lf}}\\ \boldsymbol{0} & \boldsymbol{D_{fl}}                                                                                                                                                                                                                                                       \end{bmatrix}$. Denote the total number of edges between the follower $i$ and leaders as $n_{ef}^{i}$. As there is no topological connection between followers, then $\boldsymbol{D}_{fl}$ can be expressed as:\\
    \begin{equation*}
    \label{e18}
    \small
     \boldsymbol{D}_{fl}=\begin{bmatrix}-\mathbf{1}\in\mathbb{R}^{1\times n_{ef}^1}&\mathbf{0}\in\mathbb{R}^{1\times n_{ef}^2} & ... & \mathbf{0}\in\mathbb{R}^{1\times n_{ef}^{n_f}}\\ \mathbf{0}\in\mathbb{R}^{1\times n_{ef}^1} & -\mathbf{1}\in\mathbb{R}^{1\times n_{ef}^2} & ... & \mathbf{0}\in\mathbb{R}^{1\times n_{ef}^{n_f}}\\ ... & ... & ... & ... \\ \mathbf{0}\in\mathbb{R}^{1\times n_{ef}^1} & \mathbf{0}\in\mathbb{R}^{1\times n_{ef}^2} & ... & -\mathbf{1}\in\mathbb{R}^{1\times n_{ef}^{n_f}} \end{bmatrix}
    \end{equation*}
    where $\mathbf{1}$ and $\mathbf{0}$ implies all-1 and all-0 vector, respectively. As $\boldsymbol{\Omega}=\boldsymbol{D}diag(\omega)\boldsymbol{D}^T$, then,
    \begin{equation}
    \label{e19}
    \boldsymbol{\omega}_{ef}=\{\omega_1^1,...,\omega_1^{n_{ef}^1},...,\omega_{n_f}^1,...,\omega_{n_f}^{n_{ef}^{n_f}}\}.
    \end{equation}
    The stress block $\boldsymbol{\Omega}_{ff}$ can be constructed as: 
    \begin{equation}
    \label{e20}
     \boldsymbol{\Omega}_{ff}=\boldsymbol{D}_{fl}diag(\boldsymbol{\omega}_{ef})\boldsymbol{D}_{fl}^T=\bigoplus_{i=1}^{n_{f}}\sum_{j=1}^{n_{ef}^{i}}\omega_i^j
    \end{equation}
    According to the stress equilibrium condition, for each follower $i$, $\sum_{j=1}^{n_{ef}^i}\omega_i^j(\boldsymbol{r}_i-\boldsymbol{r}_j)=0$, where $\boldsymbol{r}_i$ and $\boldsymbol{r}_j$ represent the initial configuration of follower and leader, respectively. With this prerequisite, we can obtain that $\sum_{j=1}^{n_{ef}^i}\omega_i^j\neq0$, which is proved in Proposition \ref{p1}. Under this condition, 
    \begin{equation}
    \label{e21}
     \det(\boldsymbol{\Omega}_{ff})=\vert\prod_{i=1}^{n_f}(\sum_{j=1}^{n_{ef}^{i}}\omega_i^j)\vert>0
    \end{equation}
    Based on the characteristics of the matrix det, $\boldsymbol{\Omega}_{ff}$ is nonsingular, which means $\mathcal{G}(\mathcal{V}(t), \mathcal{E}(t))$ can achieve \textit{affine localizability}.
   \end{proof}
    
   \begin{myPro}
    \label{p1} (Stress characteristics for followers) For a dynamic formation $\mathcal{G}(\mathcal{V}(t), \mathcal{E}(t))$ that satisfies Assumption \ref{a3}, when each follower is topologically connected only to the leader, $\sum_{j=1}^{n_{ef}^i}\omega_i^j\neq0$, where $n_{ef}^{i}$ denotes the total number of edges between follower $i$ and leaders.
   \end{myPro}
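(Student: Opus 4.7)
The plan is to argue by contradiction. Assume that for some follower $i$, the stress sum $\sigma_i := \sum_{j=1}^{n_{ef}^i}\omega_i^j$ vanishes, and show this is inconsistent with the stress equilibrium condition (\ref{e1}) combined with Assumptions \ref{Aum1} and \ref{a3}.

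First I would expand the equilibrium identity restricted to row $i$, which under the topology of Theorem \ref{t3} involves only leader-neighbors: from $\sum_j \omega_i^j(\boldsymbol{r}_j - \boldsymbol{r}_i) = 0$ one rearranges to $(\sum_j \omega_i^j)\boldsymbol{r}_i = \sum_j \omega_i^j \boldsymbol{r}_j$. Substituting $\sigma_i = 0$ produces the homogeneous identity $\sum_j \omega_i^j \boldsymbol{r}_j = \boldsymbol{0}$ alongside $\sum_j \omega_i^j = 0$; stacking these two yields $\sum_j \omega_i^j [\boldsymbol{r}_j^T,\,1]^T = \boldsymbol{0}$, i.e.\ a nontrivial affine dependence among the adjacent leader positions with coefficients $\omega_i^j$. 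These coefficients cannot all be zero, since each $\omega_i^j$ indexes an actual edge of the stress graph.

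Next I would invoke Assumption \ref{a3} ($(d{+}1)$-rooted) to guarantee $n_{ef}^i \geq d+1$, and Assumption \ref{Aum1} (leaders affinely span $\mathbb{R}^d$ and are in generic position) so that any $(d+1)$-tuple of leader positions is affinely independent. In the baseline case $n_{ef}^i = d+1$ the stacked matrix $[[\boldsymbol{r}_{j_1}^T,1]^T,\ldots,[\boldsymbol{r}_{j_{d+1}}^T,1]^T]$ has full column rank $d+1$, its null space is trivial, and every $\omega_i^j$ must vanish---contradicting the existence of the incident edges and closing the baseline case.

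The expected main obstacle is the over-determined regime $n_{ef}^i > d+1$, where affine dependence in $\mathbb{R}^d$ is automatic and the rank count of the previous step fails on its own. I plan to handle this by appealing to Lemma \ref{l3}: for a generic $(d{+}1)$-rooted configuration the stress matrix $\boldsymbol{\Omega}$ has exact rank $n-d-1$. Positing $\sigma_i = 0$ for a follower $i$ connected only to leaders zeroes out the $i$-th diagonal entry of $\boldsymbol{\Omega}_{ff}$ while the off-diagonal entries $-\omega_i^j$ in the leader block are pinned by the affine-dependence identity of the previous paragraph to lie in the null space of the augmented leader matrix; I would then trace the rank accounting to show this forces $\mathrm{rank}(\boldsymbol{\Omega}) < n-d-1$, contradicting Lemma \ref{l3}. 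As a backup, one can exploit the fact that the stresses in affine formations can always be constructed from the affine coordinates of each follower with respect to its leader-neighbors, so that $\sigma_i = 1$ by normalization---either route delivers the desired contradiction.
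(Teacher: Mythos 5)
Your core argument coincides with the paper's: both proceed by contradiction, feed $\sum_j\omega_i^j=0$ into the equilibrium identity (\ref{e1}) to extract a nontrivial affine dependence $\sum_j\omega_i^j[\boldsymbol{r}_j^T,\,1]^T=\boldsymbol{0}$ among the incident leaders, and then kill it with a rank argument on the augmented leader matrix $\boldsymbol{\bar r}$. Where you genuinely diverge is in the over-determined regime $n_{ef}^i>d+1$, and this is to your credit: the paper pads the coefficient vector to length $n_l$ and asserts that, because $\boldsymbol{\bar r}^T\in\mathbb{R}^{(d+1)\times n_l}$ has full row rank $d+1$, its null space contains only the zero vector --- which is true only when $n_l=d+1$; for $n_l>d+1$ the null space has dimension $n_l-d-1>0$ and the paper's contradiction does not follow. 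Your baseline case ($n_{ef}^i=d+1$, restricting to the actually incident leaders and using their affine independence) is the clean version of what the paper does, and your proposed escalation to Lemma \ref{l3} for $n_{ef}^i>d+1$ is the right repair: since $\boldsymbol{\Omega}$ is positive semidefinite with $\Omega_{ii}=\sum_j\omega_i^j=0$, the whole $i$-th row and column must vanish, and deleting them leaves a stress matrix on $n-1$ points whose rank is at most $n-d-2$, contradicting $\mathrm{rank}(\boldsymbol{\Omega})=n-d-1$. You should carry that computation out explicitly rather than leave it as "trace the rank accounting." One caution: your backup route (normalizing the affine coordinates so that $\sigma_i=1$) only exhibits a particular equilibrium stress with nonzero row sums; it does not prove the universally quantified claim of Proposition \ref{p1} for every stress satisfying Assumption \ref{a3}, so you should rely on the semidefiniteness/rank argument, not the normalization.
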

  
   \begin{proof}
    (contradiction) For follower $i$, assume $\sum_{j=1}^{n_{ef}^i}\omega_i^j=0$, then $\omega_i^{n_{ef}^i}=-\sum_{j=1}^{n_{ef}^i-1}\omega_i^j$. According to the stress equilibrium condition, for each follower $i$, $\sum_{j=1}^{n_{ef}^i}\omega_i^j(\boldsymbol{r}_i-\boldsymbol{r}_j)=0$, where $\boldsymbol{r}_i$ and $\boldsymbol{r}_j$ represent the initial configuration of follower and leader, respectively. From this, we can obtain that:
    \begin{equation}
    \label{e22}
     \sum_{j=1}^{n_{ef}^i-1}\omega_i^j(\boldsymbol{r}_i-\boldsymbol{r}_j)-\sum_{j=1}^{n_{ef}^i-1}\omega_i^j(\boldsymbol{r}_i-\boldsymbol{r}_{n_{ef}^i})=0
    \end{equation}
    Simplification of the above equation yields that:
    \begin{equation}
    \label{e23}
    \sum_{j=1}^{n_{ef}^i-1}\omega_i^j\boldsymbol{r}_j-\sum_{j=1}^{n_{ef}^i-1}\omega_i^j\boldsymbol{r}_{n_{ef}^i}=0
    \end{equation}
    As the topological connectivity exists only between follower $i$ and leaders, then, $n_{ef}^i\leq n_l$. Complementing the non-existent $\boldsymbol{r}_j$ in Eq.(\ref{e23}) with 0, then Eq.(\ref{e23}) can be reconstructed as:
    \begin{equation}
    \label{e24}
     \sum_{j=1}^{n_l-1}\omega_i^j\boldsymbol{r}_j-\sum_{j=1}^{n_l-1}\omega_i^j\boldsymbol{r}_{n_l}=0
    \end{equation}
    Denote $\boldsymbol{\bar{r}}=\begin{bmatrix}\boldsymbol{r}_1^T & \boldsymbol{r}_2^T & ... & \boldsymbol{r}_{n_l}^T \\ \mathbf{1} & \mathbf{1} & ... & \mathbf{1} \end{bmatrix}^T$, then $\boldsymbol{\bar{r}}^T\boldsymbol{\omega}=0$. However, according to the Lemma 1 in \cite{Zhao2018}, $rank(\boldsymbol{\bar{r}})=d+1$, which implies $\boldsymbol{\bar{r}}^T$ is of row-full rank. Then, from the properties of vector spaces, only zero vectors exist in the zero space of $\boldsymbol{\bar{r}}^T$. According to the definition of $\omega$ in Eq.(\ref{e2}), $\sum_{j=1}^{n_{ef}^i}\vert\omega_i^j\vert>0$, which contradicts the above  inference obviously.
   \end{proof}
   According to Theorem \ref{t3}, when the hierarchical reorganization is completed through Section \MakeUppercase{\romannumeral4}-A, the topological connection structure in the current configuration is reset, such that each follower establishes a topological connection only with leaders. An example of topological switch is shown in Fig. \ref{fig:dynamic_graph}.
    
   \begin{figure} 
    \raggedright
	 \includegraphics[width=1\linewidth]{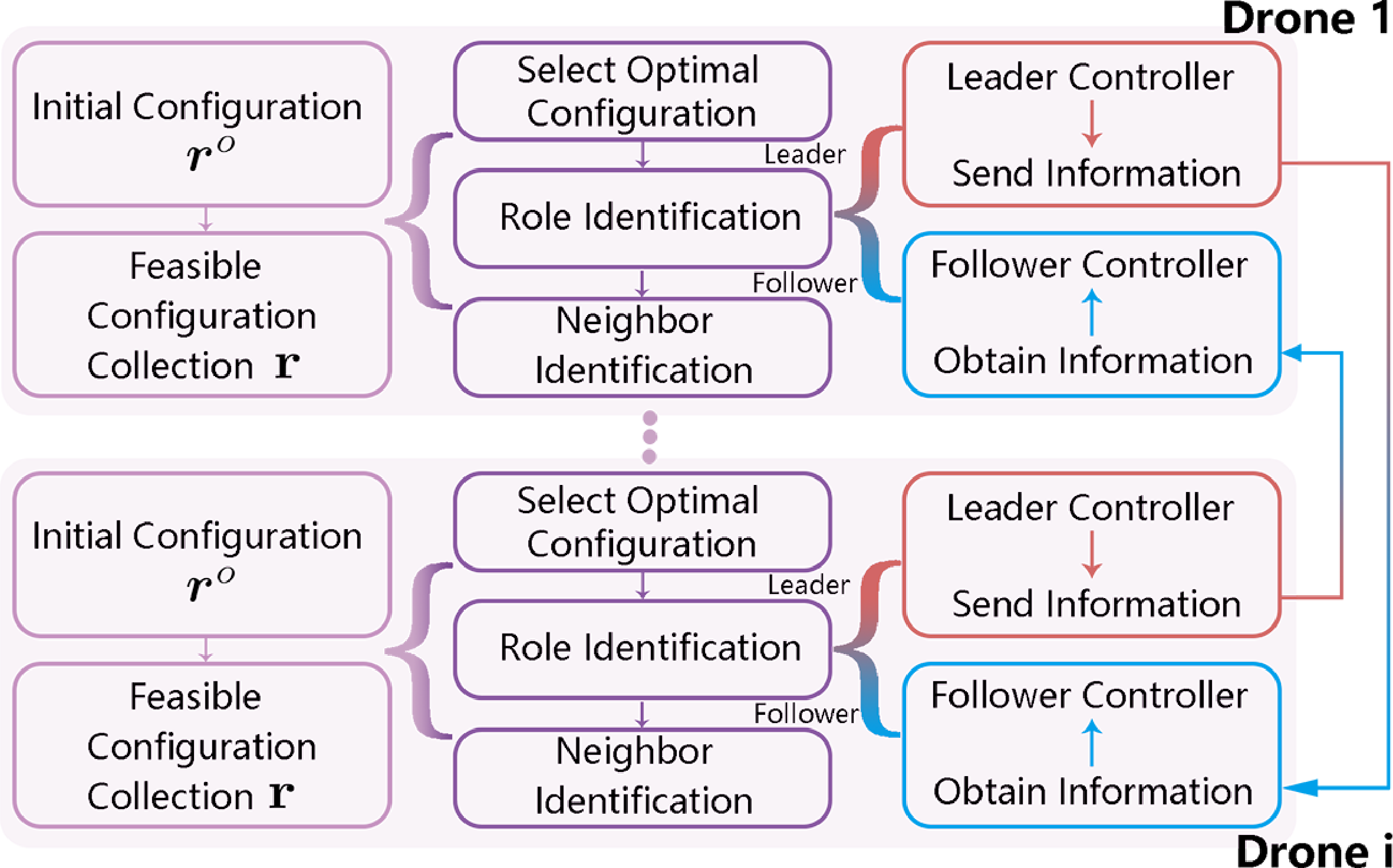}  
        \caption{The system architecture of reconfigurable hierarchical formation.}  
	\label{fig:framework} 
   \end{figure}

  \section{Experimental Verification}
   \subsection{Simulation}

    The simulation test is carried out in the robot operating system (ROS) + Gazebo + PX4. Fig. \ref{fig:3d_sim} shows the formation's overall movement process and trajectory. Five target points are set up during the simulation, and the formation performs a total of $5$ hierarchical reorganizations. 
    \begin{figure}[t!]
        \centering
         \includegraphics[width=1.0\linewidth]{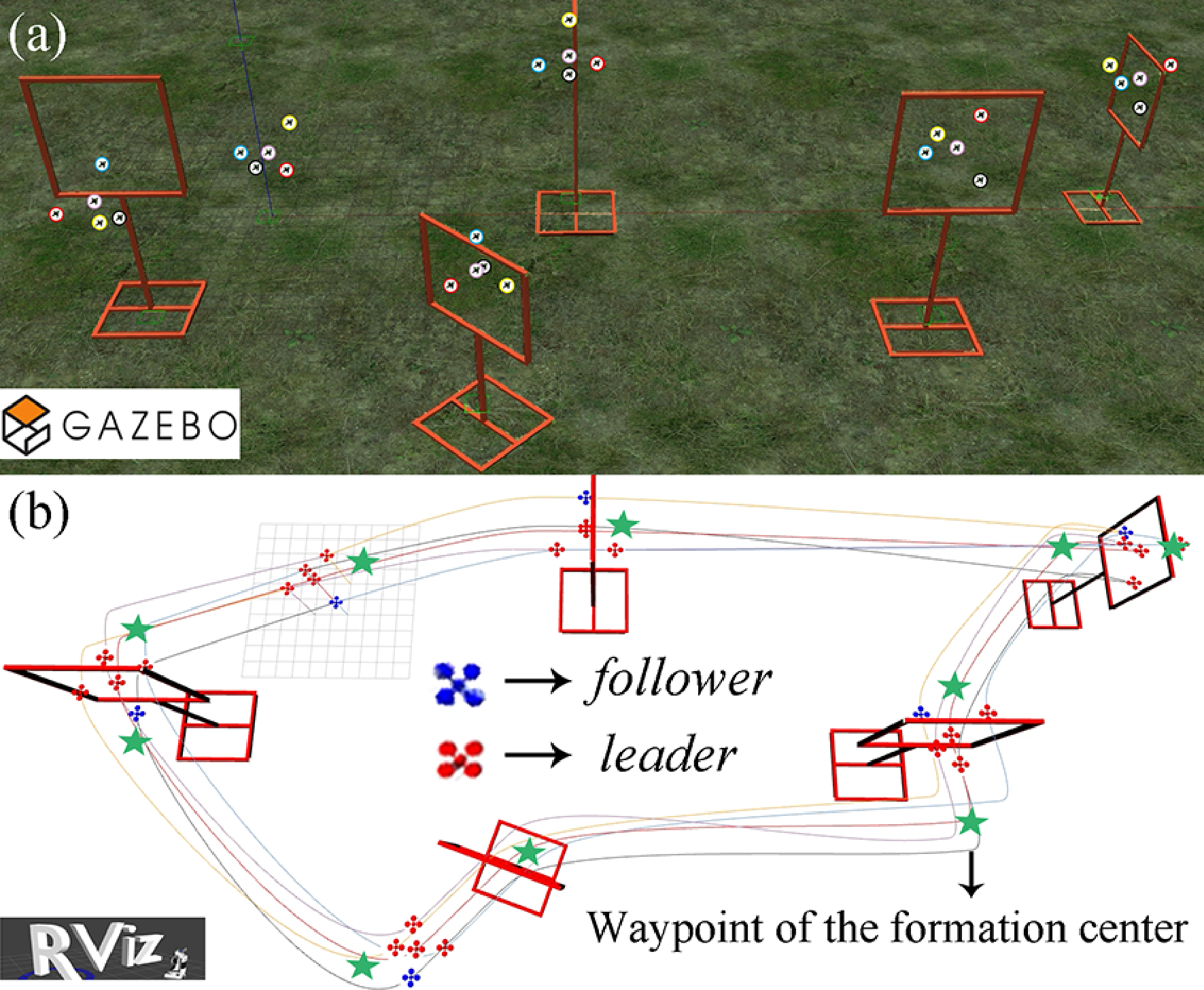} 
        \caption{Simulation Process in 3D space.(a) Simulation in Gazebo. (b) Visulization in RViz.} 
        \label{fig:3d_sim}    
     \end{figure}
     
    Table \ref{tab:path_comparison} illustrates the impact of hierarchical reorganizations on the different drone path lengths. When employing the formation strategies, Drone 2 exhibits the longest motion path, measuring $121.92m$, while Drone 4 covers the shortest distance at $120.38m$, resulting in a difference of $1.54m$. Conversely, without hierarchical reorganizations, Drone 1 records the longest motion path within the formation at $126.92m$, while Drone 3 covers the shortest distance at $120.88m$, yielding a substantial difference of $6.08m$. This signifies that the proposed strategies can effectively equalize the motion distances of drones, thereby significantly enhancing both the coordination within the formation and the formation's overall endurance.
    
    \begin{table}
    \small
    \caption{Comparison of different drones' path length with or without hierarchical reorganizations}
    \label{tab:path_comparison}
    \centering
    \begin{spacing}{1}
    \begin{threeparttable}
    \resizebox{\linewidth}{!}{
    \begin{tabular}{cccccc}
    \toprule[1.5pt]
    Path Length($m$)&Drone 0&Drone 1&Drone 2&Drone 3&Drone 4\\\midrule[1pt]
    With hierarchical reorganization&121.33&121.76&\bf{120.38}&\bf{121.92}&121.76\\
    W/O hierarchical reorganization\cite{Zhao2018}&122.28&\bf{120.88}&\bf{126.92}&122.60&120.43\\
        \bottomrule[1.2pt]
    \end{tabular}}
    \end{threeparttable}
    \end{spacing}
    \end{table}
    
    \begin{table}
    \small
    \caption{Comparison of different drones' velocity with or without hierarchical reorganizations}
    \label{tab:velocity_comparison}
    \centering
    \begin{spacing}{1}
    \begin{threeparttable}
    \resizebox{\linewidth}{!}{
    \begin{tabular}{cccccc}
    \toprule[1.5pt]
    Velocity($m/s$)&Drone 0&Drone 1&Drone 2&Drone 3&Drone 4\\\midrule[1pt]
    With hierarchical reorganization&1.75&1.76&\bf{1.73}&1.75&\bf{1.76}\\W/O hierarchical reorganization\cite{Zhao2018}&1.56&1.54&\bf{1.60}&1.56&\bf{1.53}\\\bottomrule[1.2pt]
    \end{tabular}}
    \end{threeparttable}
    \end{spacing}
    \end{table}

    Table \ref{tab:velocity_comparison} presents the influence of hierarchical reorganizations on the average motion speeds of different drones. When employing the proposed strategy, Drone 4 achieves the highest motion speed, recording a speed of $1.76m/s$, while Drone 2 exhibits the slowest motion speed at $1.73m/s$, resulting in a difference of $0.03m/s$. In contrast, without the hierarchical reorganizations, Drone 4 attains the highest motion speed within the cluster at $1.60m/s$, while Drone 2 moves at the slowest speed of $1.53m/s$, resulting in a more substantial difference of $0.07/s$. Furthermore, the overall motion durations for both approaches are $38.93s$ and $37.31s$, respectively. This indicates that, with similar overall motion durations, the proposed strategy significantly reduces the average speed disparities among individual entities, allowing the swarm to maintain similar motion speeds. Consequently, this enhances both the in-team coordination and flexibility of the formation.

    \subsection{Experimental Setup} 

    To verify the performance of the proposed algorithm, we conduct several indoor experiments, as shown in Fig. \ref{fig:environment}. Specifically, Fig. \ref{fig:environment}(a) depicts a scenario set within an obstacle-free dark environment, where five drones execute formation flights in the $'1'$ and $'8'$ patterns. Fig. \ref{fig:environment}(b) and Fig. \ref{fig:environment}(c) present environments featuring a hoop obstacle and a natural obstacle, respectively. In Fig. \ref{fig:environment}(b), the quintet of drones performs a coordinated obstacle avoidance flight through the hoop, whereas in Fig.  \ref{fig:environment}(c), they engage in formation flights with free transformations within the natural obstacle environment. The dimensions (length$\times$width$\times$height) of each drone are $23cm\times23cm\times14cm$. The IMU is embedded in each drone's flight control hardware, referred to as Pixhawk$^\circledR$. Moreover, the NOKOV$^\circledR$ motion capture system provides the global position measurements, and an UP Board 4000$^\circledR$ computing board running ROS is adopted as the onboard computer. A small TP-LINK$^\circledR$ router model TL-WDR5650 is mounted on each drone to facilitate communication among multiple drones. Each drone is equipped with a light strip, controlled via GPIO on the UP Board 4000$^\circledR$ computing board. This setup allows for visualizing the hierarchical reorganization process within the swarm. When a drone's light strip is illuminated, it indicates that the drone is the current leader of the formation.
     
    \subsection{Results Analysis} 
    
    \begin{figure*}[t!]
        \centering
         \includegraphics[width=1.0\linewidth]{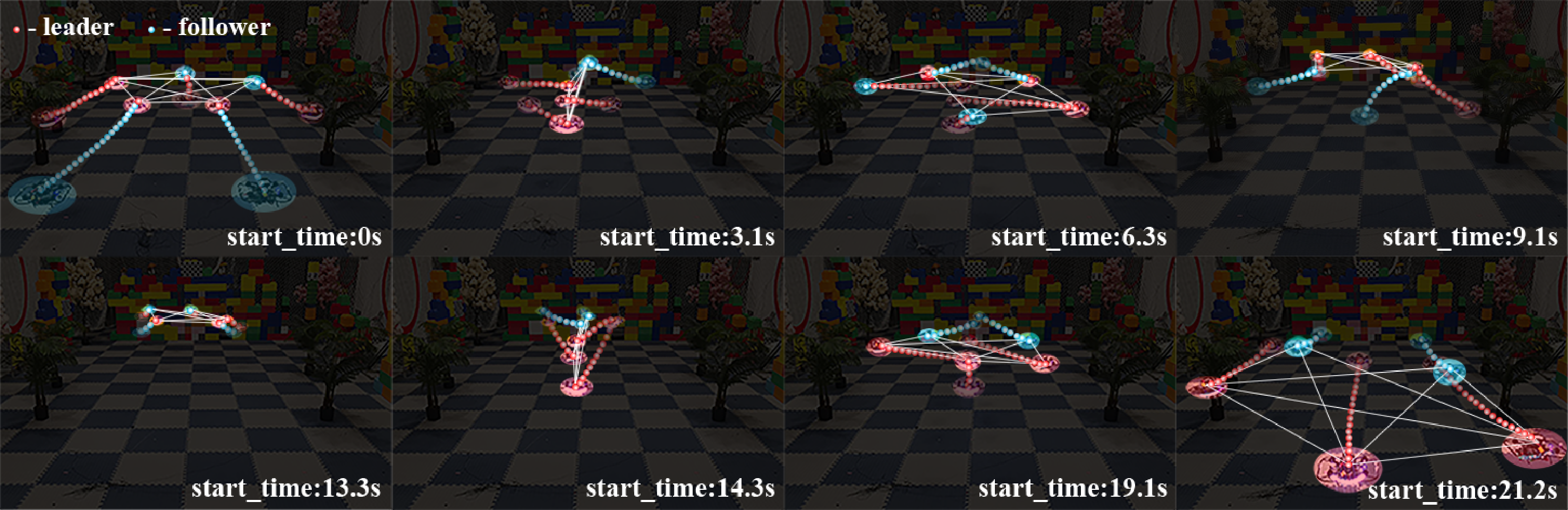}
        \caption{Dense formation transform experiment.}
        \label{fig:transform}  
     \end{figure*} 
     
      \begin{figure}[t!]
        \centering
         \includegraphics[width=1.0\linewidth]{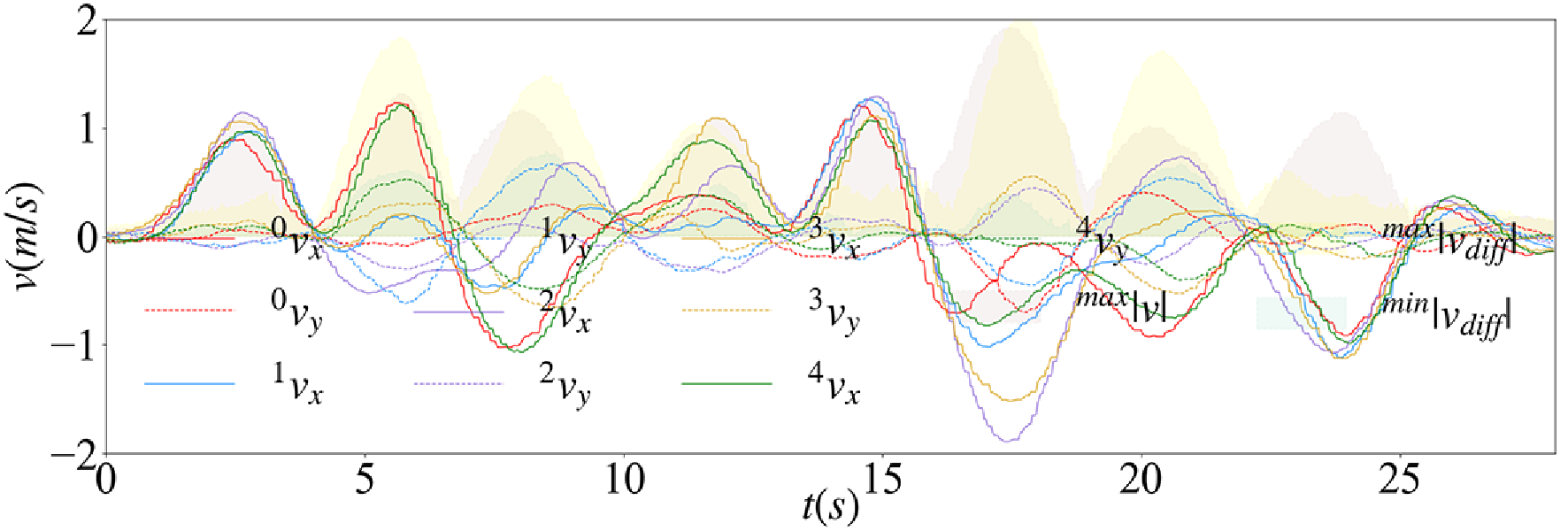}
        \caption{The velocity curve of the formation in Exp. 4.} 
        \label{fig:vel18}    
     \end{figure}
   
        \begin{table}
        \small
        \caption{The hierarchical reorganization scenario and formation transform matrix in Exp. 4}
        \label{tab:exp4_comparison}
        \centering
        \begin{spacing}{1}
        \begin{threeparttable}
        \resizebox{\linewidth}{!}{
        \begin{tabular}{ccccccccc}
        \toprule[1.5pt]
        \multicolumn{9}{c}{Leader Switching}\\
        x $\quad$ & o$\quad$ & x$\quad$ & x$\quad$ & o$\quad$ & x$\quad$ & o$\quad$ &x$\quad$ & x $\quad$\\
        \multicolumn{9}{c}{Transform Matrix}\\
        \multicolumn{9}{c}{$\begin{bmatrix} 1 & 0 \\ 0 & 1 \end{bmatrix}\rightarrow \begin{bmatrix} 0.8 & 0 \\ 0 & 0.8 \end{bmatrix}\rightarrow \begin{bmatrix} 2 & 0 \\ 0.5 & 0 \end{bmatrix}\rightarrow \begin{bmatrix} 1 & 0 \\ 0 & 1 \end{bmatrix}\rightarrow \begin{bmatrix} 1 & 0 \\ 0 & 1 \end{bmatrix}$}\\
        \multicolumn{9}{c}{$\rightarrow \begin{bmatrix} 0.8 & 0 \\ 0 & 0.8 \end{bmatrix} \rightarrow \begin{bmatrix} 2 & 0 \\ 0.5 & 0 \end{bmatrix} \rightarrow \begin{bmatrix} 0.8 & 0 \\ 0 & 0.8 \end{bmatrix} \rightarrow \begin{bmatrix} 1 & 0 \\ 0 & 1 \end{bmatrix}$}\\
        \bottomrule[1.2pt]
        \end{tabular}}
        \end{threeparttable}
        \end{spacing}
    \end{table}

   In the experiment involving dense formation changes, as shown in Fig \ref{fig:transform}, referred to as Exp. 4, five drones are arranged in a regular pentagon with a side length of $1.2m$. The formation undergoes continuous hierarchical reorganizations in the environment depicted in Fig. \ref{fig:environment}(c). The velocity distribution of the formation in Exp. 4 is shown in Fig. \ref{fig:vel18}, with the maximum instantaneous speed within the formation being $1.92m/s$.

   Table \ref{tab:exp4_comparison} reflects the hierarchical reorganization scenario and transformation matrices at times labeled in Subfigures of Fig. \ref{fig:transform}. Combining the information from Fig. \ref{fig:transform} and Table \ref{tab:exp4_comparison}, it can be observed that at $t=3.1s,14.3s$, the formation changes into a straight line, at $t=0s,13.3s,19.1s$, the formation undergoes scaling, at $t=6.3s,21.2s$, it expands, and at $t=3.1s,6.3s,13.3s$, a dynamic leader switch occurs to respond to rapid maneuvers, such as a formation turning. Exp. 4 illustrates that affine transformations under hierarchical reorganization enable flexible transformations and rapid execution of complex maneuvers, such as formation U-turns.  
  
 \section{Conclusion and Discussion}
  This work presents an innovative reconfigurable hierarchical formation approach that leverages hierarchical reorganizations to improve aerial swarms' flexibility and in-team coordination. The efficacy of the proposed methodologies is verified through a series of simulations and extensive real-world trials, which indicate a significant enhancement in team flexibility. Specifically, the formation adeptly executes multiple hierarchical reorganizations, with the most prolonged reconfiguration completed in a mere 0.047s. This rapid adaptability enables a quintet of aerial robots to undertake intricate, collaborative tasks, including executing agile maneuvers and navigating through obstacles at speeds reaching 1.9m/s.
  
  Future works will consider two main points. First, we intend to delve deeper into the control characteristics during the hierarchical reorganization process. This will allow us to enhance the robustness of formation control while maintaining high flexibility. Second, we will seek optimal points for dynamic leader autonomous switching, aiming to increase the autonomy of reconfigurable hierarchical formation.   
 

\printbibliography
%
 \end{document}